\documentclass[iicol]{sn-jnl}
\usepackage[authoryear,sort&compress]{natbib}%

\bibliographystyle{stco}%


\usepackage[switch]{lineno}

\newenvironment{lfigure}{%
  \begin{figure*}%
}{%
  \end{figure*}%
}
\newenvironment{sfigure}{%
  \begin{figure}%
}{%
  \end{figure}%
}

\algrenewcommand\algorithmicrequire{\textbf{Input:}}
\algrenewcommand\algorithmicensure{\textbf{Output:}}

\usepackage[labelformat=simple]{subcaption}

\usepackage{mathtools}

\usepackage{caption}
\captionsetup[figure]{labelsep=space}
\captionsetup[table]{labelsep=space}

\usepackage{amsmath}

\newcommand{\mycite}[1]{\citep{#1}}
\newcommand{\Real}{\mathbb{R}}
\newcommand{\eps}{\varepsilon}
\newcommand{\tr}{^{\top}}

\newcommand{\pp}[1]{\frac{\partial}{\partial{#1}}}

\newcommand{\AS}[1]{\left(#1\right)}
\newcommand{\ASBR}[1]{\right.#1\left.}
\newcommand{\AD}[1]{\left[#1\right]}

\newcommand{\AN}[1]{\begin{align*}#1\end{align*}}
\newcommand{\AL}[1]{\begin{align}#1\end{align}}
\newcommand{\SP}[1]{\begin{split}#1\end{split}}
\newcommand{\LN}[1]{\label{#1}\\}
\newcommand{\LB}[1]{\label{#1}}
\newcommand{\NN}{\nonumber\\}
\newcommand{\argmin}{\mathop{\rm arg~min}\limits}
\newcommand{\argmax}{\mathop{\rm arg~max}\limits}
\newcommand{\mywidth}{3in}
\renewcommand{\vec}[1]{{\boldsymbol{#1}}}
\newcommand{\ph}{\varphi}

\newcommand{\fei}{m}
\newcommand{\fc}{\mset}
\newcommand{\fea}{\ph}
\newcommand{\summ}{\sum_{\fei\in\fc}}
\newcommand{\sumt}{\sum_{t=0}^{T}}

\newcommand{\et}{\eta}
\renewcommand{\th}{\theta}
\newcommand{\vth}{{\vec{\th}}}

\newcommand{\eng}{E}
\newcommand{\ent}{H}
\newcommand{\ten}{\tilde{H}}
\newcommand{\entrho}{H_{\rho}}
\newcommand{\entcom}{H_{c}}
\newcommand{\loss}{F}
\newcommand{\losa}{\mathcal{L}}

\newcommand{\tloa}{\tilde{\losa}}
\newcommand{\lmbd}{\Lambda}
\newcommand{\lm}{\lmbd_\fei}
\renewcommand{\pm}{\fea_\fei}
\newcommand{\fm}{\eta_\fei}

\newcommand{\mm}{\mu_\fei}
\newcommand{\vtho}{{\vth^*}}
\newcommand{\thom}{{\th^*_\fei}}
\newcommand{\thm}{\th_\fei}

\newcommand{\am}{a_\fei}
\newcommand{\va}{\vec{a}}
\newcommand{\vw}{\vec{w}}
\newcommand{\wmp}{{w'}_\fei}
\newcommand{\wm}{w_\fei}

\renewcommand{\tt}{^{(T)}}
\newcommand{\ts}{^{(t)}}
\newcommand{\tn}{^{(T+1)}}
\newcommand{\tp}{^{(T-1)}}
\newcommand{\tmax}{T_{\textrm{max}}}
\newcommand{\ups}{\eps}

\newcommand{\xset}{\mathcal{X}}
\newcommand{\qset}{\mathcal{Q}}
\newcommand{\expe}[2]{\mathbb{E}_{#1}\AD{#2}}

\newcommand{\rst}{\rho^{t,T}}

\newcommand{\vx}{{\vec{x}}}
\newcommand{\cmark}{\checkmark}
\newcommand{\xmark}{-}
\newcommand{\qcur}{q_{cur}}
\newcommand{\herding}{herding}
\newcommand{\domx}{D}

\newcommand{\domb}{\domx}
\newcommand{\domc}{\mathcal{D}}
\newcommand{\domblb}{\domb^-}
\newcommand{\dombub}{\domb^+}
\newcommand{\lbx}{\alpha}
\newcommand{\ubx}{\beta}

\newcommand{\lbb}{\lbx}

\newcommand{\lbd}{\lbx'}

\newcommand{\ubb}{\ubx}

\newcommand{\ubd}{\ubx'}
\newcommand{\etalba}{A}
\newcommand{\etauba}{B}
\newcommand{\etalbb}{\bar{A}}
\newcommand{\etaubb}{\bar{B}}
\newcommand{\setp}[1]{\{{#1}\}}
\newcommand{\setpar}[2]{\{{#1}\mid{#2}\}}
\newcommand{\bj}{\boldsymbol{J}}
\newcommand{\mset}{\mathcal{M}}
\newcommand{\numm}{M}
\newcommand{\mdim}{\Real^{\numm}}
\newcommand{\eq}[1]{Eq.~(\ref{#1})}

\newcommand{\poutput}{p_{\textrm{output}}}
\newcommand{\toutput}{T_{\textrm{output}}}
\newcommand{\epsherding}{\eps_{\textrm{herding}}}
\newcommand{\pjump}{p_{\textrm{jump}}}
\newcommand{\tburnin}{T_{\textrm{burnin}}}
\newcommand{\pbm}{p_{\textrm{BM}}}
\newcommand{\pbi}{p_{\textrm{bi}}}
\newcommand{\kupdate}{k_{\textrm{update}}}
\newcommand{\etalearn}{\eta_{\textrm{learn}}}



\jyear{2021}%

\theoremstyle{thmstyleone}%
\newtheorem{theorem}{Theorem}

\theoremstyle{thmstyletwo}%
\newtheorem{corollary}[theorem]{Corollary}%
\newtheorem{example}{Example}%

\theoremstyle{thmstylethree}%

\raggedbottom

\begin{document}

\title[Entropic Herding]{Entropic Herding}


\author*[1]{\fnm{Hiroshi} \sur{Yamashita}}\email{hiroshi-yamashita@g.ecc.u-tokyo.ac.jp}
\author[2]{\fnm{Hideyuki} \sur{Suzuki}}
\author[3]{\fnm{Kazuyuki} \sur{Aihara}}

\affil*[1]{\orgdiv{Intelligent Mobility Society Design, Social Cooperation Program, Graduate School of Information Science and Technology, the University of Tokyo}, \orgaddress{\street{7-3-1 Hongo}, \city{Bunkyo-ku, Tokyo}, \country{Japan}}}

\affil[2]{\orgdiv{Graduate School of Information Science and Technology}, \orgname{Osaka University}, \orgaddress{\street{1-5 Yamadaoka}, \city{Suita-shi, Osaka}, \country{Japan}}}

\affil[3]{\orgdiv{International Research Center for Neurointelligence}, \orgname{the University of Tokyo}, \orgaddress{\street{7-3-1 Hongo}, \city{Bunkyo-ku, Tokyo}, \country{Japan}}}


\abstract{
Herding is a deterministic algorithm used to generate data points that can be regarded as random samples satisfying input moment conditions. The algorithm is based on the complex behavior of a high-dimensional dynamical system and is inspired by the maximum entropy principle of statistical inference. In this paper, we propose an extension of the herding algorithm, called entropic herding, which generates a sequence of distributions instead of points. Entropic herding is derived as the optimization of the target function obtained from the maximum entropy principle. Using the proposed entropic herding algorithm as a framework, we discuss a closer connection between {\herding} and the maximum entropy principle. Specifically, we interpret the original herding algorithm as a tractable version of entropic herding, the ideal output distribution of which is mathematically represented. We further discuss how the complex behavior of the herding algorithm contributes to optimization. We argue that the proposed entropic herding algorithm extends the application of herding to probabilistic modeling. In contrast to original herding, entropic herding can generate a smooth distribution such that both efficient probability density calculation and sample generation become possible. To demonstrate the viability of these arguments in this study, numerical experiments were conducted, including a comparison with other conventional methods, on both synthetic and real data.

}

\keywords{Herding, Probabilistic modeling, Maximum entropy principle, Complex dynamics}



\maketitle

\section{ Introduction  }
\label{sec-intro}
In a scientific study, we often collect data on the study object, but its perfect information can not be obtained. Therefore, we often have to make a statistical inference based on the available data, assuming a certain background distribution. One way of drawing a statistical inference is to adopt the distribution with the largest uncertainty, consistent with the available information. This is referred to as the maximum entropy principle \mycite{Jaynes} because entropy is often used to measure the uncertainty. 

Specifically, for a distribution that has a probability mass (or density) function $p(x)$, the (differential) entropy is defined as 
\AL{
H(p)=\expe{p}{-\log p(x)},
}
 where $\mathbb{E}_p$ denotes the expectation over $p$. Assume that the collected data are a set of averages $\mm$ of feature values $\pm(x)\in\Real$ for $m\in \mset$, where $\mset$ is a set of indices. Then, the distribution $p$ that the maximum entropy principle adopts is known as the Gibbs distribution: 
\AL{
\pi_{\vth}(x)=\frac{1}{Z_{\vth}}\exp\AS{-\summ\thm\pm(x)},\LB{eq-MRF}
}
 where $\vth$ is the vector consisting of the parameters $\theta_m$ for $m\in \mset$, and $Z_\vth$ is the coefficient that makes the total probability mass one. The parameters can be chosen by maximizing the likelihood of the data. However, if the distribution is defined in a high-dimensional or continuous space, parameter learning often becomes difficult because approximations of the averages over the distribution are required.

The herding algorithm \mycite{Herding,Herding2010} is another method that can be used to avoid parameter learning. This algorithm, which is represented as a dynamical system in a high-dimensional space, deterministically generates an apparently random sequence of data, where the average of feature values converges to the input target value $\mm$. The generated sequence is treated as a set of pseudo-samples generated from the background distribution in the following analysis. Thus, we can bypass the difficult step of parameter learning of the Gibbs distribution model. The paper by \citet{Herding} describes its motivation in relation to the maximum entropy principle.

In this paper, we propose an extension of {\herding}, which is called entropic herding. We use the proposed entropic herding as a framework to analyze the close connection between {\herding} and the maximum entropy principle. Entropic herding outputs a sequence of distributions, instead of points, that greedily minimize a target function obtained from an explicit representation of the maximum entropy principle. The target function is composed of the error term of the average feature value and the entropy term. We mathematically analyze the minimizer, which can be regarded as the ideal output of entropic herding. We also argue that the original herding is obtained as a tractable version of entropic herding and discuss the inherent characteristics of herding in relation to the complex behavior of herding.

In addition, in section \ref{sec-adv}, we argue the advantages of the proposed entropic herding, in contrast to the original, which we call point herding. The advantages include the smoothness of the output distribution, the availability of model validation based on the likelihood, and the feasibility of random sampling from the output distribution. 

These arguments are further discussed through numerical examples in Section \ref{sec-expe} using synthetic and real data.

\section{ Background  }
\label{sec-background}
\subsection{    Herding  }
\label{sec-herding}
  Let $\xset$ be the set of possible data points. Let us consider a set of feature functions $\pm:\xset\to\Real$ indexed with $m\in \mset$, where $\mset$ is the set of indices. Let $M$ be the size of $\mset$, and let the bold symbols denote $M$-dimensional vectors with indices in $\mset$. Let $\mm$ be the input target value of the feature mean, which can be obtained from the empirical mean of the features over the collected samples. Herding generates the sequence of points $x\tt\in\xset$ for $T=1,2,\ldots,\tmax$ according to the equations below: 
\AL{
x\tt &= \argmax_{x\in\xset} \summ\wm\tp\pm(x),\LB{eq-opt-point}\\
\wm\tt&=\wm\tp+\mm-\pm(x\tt)\LB{eq-update-point},
}
 where $\wm\tt$ are auxiliary weight variables, starting from the initial values $\wm^{(0)}$. These equations can be regarded as a dynamical system for $\wm\tt$ in $M$-dimensional space. The average of the features over the generated sequence converges as 
\AL{
    \frac{1}{\tmax}\sum_{T=1}^{\tmax}\pm(x\tt)\to\mm\LB{eq-herding-average}
}
 at the rate of $O(1/\tmax)$ \mycite{Herding}. Therefore, the generated sequence reflecting the given information can be used as a set of pseudo-samples from the background distribution. This convergence is derived using the equation $\wm^{(\tmax)}-\wm^{(0)}=\tmax\mm-\sum_{T=1}^{\tmax}\pm(x\tt)$ and the boundedness of $\vw^{(\tmax)}$. The equation is obtained by summing both sides of \eq{eq-update-point} for $T=1,\ldots,\tmax$. The boundedness of $\vw^{(\tmax)}$ is guaranteed under some mild assumptions by using the optimality of $x\tt$ in the maximization of \eq{eq-opt-point} \mycite{Herding}.

The probabilistic model (\ref{eq-MRF}) is also regarded as a Markov random field (MRF). Herding has been extended to the case in which only a subset of variables in an MRF is observed \mycite{HerdingPOMRF}. In addition, it is combined with the kernel method to generate a pseudo-sample sequence for arbitrary distribution \mycite{KernelHerding}. The steps of {\herding} can be applied to the update steps of Gibbs sampling \mycite{HerdedGibbs}, and the convergence analysis is provided by \citet{HGweightsharing}.  

\subsection{    Related work  }
\label{sec-fwt}
Herding used the weighted average of features with time-varying weight values. This technique, which can be called time-varying feature weighting, is also employed in other research areas. It is mainly used to mitigate the problem of local minima often encountered in optimization.

The maximum likelihood learning of MRF (\ref{eq-MRF}) is often performed using the following gradient: 
\AL{
\pp{\thm} \log p(y) = -\pm(y)+\expe{p}{\pm(x) }.
}
 The expectation in the second term can be approximated using the Markov chain Monte Carlo (MCMC) \mycite{CD,PCD}. However, it is known that MCMC often suffers from slow mixing because of the sharp local minima in the potential function $E(x)=-\log p(x)+const.$. In the case of MRF (\ref{eq-MRF}), the potential function is $E(x)=\summ\thm\pm(x)$, which also has the form of a weighted sum of the features. The parameter $\thm$ changes over time during the learning process. \citet{FPCD} demonstrated that the efficiency of learning can be improved by adding extra time variation to the parameter. 

Combinatorial optimization is another example of the application of time-varying feature weighting. The Boolean satisfiability problem (SAT) is the problem of finding a binary assignment to a set of variables satisfying the given Boolean formula. A Boolean formula can be represented in a conjunctive normal form (CNF), which is a set of subformulae called clauses combined by logical conjunctions. This problem can be regarded as the minimization of the weighted sum of features, where features are defined by the truth values of clauses in the CNF. Several methods \mycite{breakout,DLM00,PAWS,SWCC} solve the SAT problem by repeatedly improving the assignment for the target function and changing its weights when the process is trapped in a local minimum. 

\citet{ERTNPhys,Sudoku} proposed a continuous-time dynamical system to solve the SAT problem, which implements the local improvement and time variation of the weight values. It is also shown that this system is effective in solving MAX-SAT, which is the optimization version of the SAT problem \mycite{CTDSMaxSAT}. As suggested by \citet{Herding}, one advantage of using deterministic dynamics is the possibility of efficient physical implementation. \citet{YinEfficient} designed an electric circuit implementing the SAT-solving continuous-time dynamical system and evaluated its performance.

\section{ Entropic herding and its target function representing maximum entropy principle  }
\label{sec-main}
We first present a summary of the proposed algorithm, which we refer to as entropic herding. Let us suppose the same situation as in Section \ref{sec-herding}. For simplicity, we assume that $\xset$ is continuous, although the arguments also hold when $\xset$ is discrete. For a distribution $q$ on $\xset$, let $\ent(q)=-\int_{\xset} q(x)\log q(x) dx$ be its differential entropy. Let $\pm$ be the feature functions and $\mm$ be the input target value of the feature mean. In addition, $\lm$ and $\eps\tt$ are also provided to the algorithm as additional parameters. The proposed algorithm is an iterative process similar to original herding, and it also maintains time-varying weights $\am$ for each feature. Each iteration of the proposed algorithm indexed with $T$ is composed of two steps as in the {\herding}: the first step is to solve the optimization problem, and the second is the parameter update based on the solution. The proposed algorithm outputs a sequence of distributions $(r\tt)$ instead of points as is the case with the original herding. The two steps in each time step, which will be derived later, are represented as follows: 
\AL{
r\tt &= \argmin_{q\in\qset} \AS{\summ \am\tp\fm(q)} - H(q), \LN{eq-opt-entropic}
\begin{split}
\am\tt&=\am\tp\\
&\quad+\ups\tt\AS{\lm(\fm(r\tt)-\mm)-\am\tp},
\end{split}\LB{eq-update-entropic}
}
 where $\qset$ is the set of candidate distributions of the output for each step and $\fm(q)\equiv \expe{q}{\pm(x)}$ is the feature mean for the distribution $q$. The pseudocode of the algorithm is provided in Algorithm \ref{pseudocode}. As will be described later, we can modify the algorithm by changing the set of candidate distributions $\qset$ and the optimization method, allowing the suboptimal solution for \eq{eq-opt-entropic}.

\begin{algorithm*}
\caption{Pseudocode of the entropic herding}
\label{pseudocode}
\begin{algorithmic}[1]
\Require
\Statex set of feature functions $\pm:\mathcal{X}\to\Real$,
\Statex target values of the feature means $\mm$,
\Statex weight parameters $\lm$,
\Statex a sequence of step size $\ups\tt$
\Statex
\State choose an initial distribution $r^{(0)}$ and let $\fm^{(0)}=\expe{r^{(0)}}{\pm(x)}$ and $\am^{(0)}=\lm(\fm^{(0)}-\mm)$
\For{$T=1\ldots\tmax$}
    \State perform several optimization steps for $\min_{q\in\qset} \AS{\summ \am\tp\expe{q}{\pm(x)}} - H(q)$ (Eq.~(\ref{eq-opt-entropic}))
    \State let $r\tt$ be the obtained distribution
    \State update $\am\tt=\am\tp+\ups\tt\AS{\lm(\expe{r\tt}{\pm(x)}-\mm)-\am\tp}$ (Eq.~(\ref{eq-update-entropic}))
\EndFor
\end{algorithmic}
\end{algorithm*}

\subsection{    Target function  }
\label{sec-losa}
The proposed procedure summarized above is derived from the minimization of a target function as presented below.

Let us consider the problem of minimizing the following function: 
\AL{
\losa(p)=\frac{1}{2}\AS{\summ\lm(\fm(p)-\mm)^2}-\ent(p).\LB{eq-loss}
}

This problem minimizes the difference between feature means over $p$ and the target vector and simultaneously maximizes the entropy of $p$. $\lm$ can be regarded as the weights for the moment conditions $\fm(p)=\mm$. When $\lm\to+\infty$, the problem becomes an entropy maximization problem that requires the solution to satisfy the moment condition exactly. Thus, this can be interpreted as a relaxed form of the maximum entropy principle.  The function $\losa$ is convex because the negated entropy function $-H(p)$ is convex. 

Let $\pi_\vth$ be the Gibbs distribution, the density function of which is defined as \eq{eq-MRF}. Suppose that there exists $\vtho$ that satisfies the following equations 
\AL{
\thom=\lm(\fm(\pi_{\vtho})-\mm)\LB{eq-tho}.
}
 The conditions for the existence of the solution are provided in Appendix \ref{sec-proof}. The functional derivative $\frac{\delta\losa}{\delta p}(x)$ of $\losa$ at $\pi_\vtho$ is obtained as follows: 
\AL{
\begin{split}
\left.\frac{\delta\losa}{\delta p}(x)\right\vert_{p=\pi_\vtho}&=\summ \lm(\fm(\pi_\vtho)-\mm) \pm(x)\\
&\quad+ \log \pi_\vtho(x)+1
\end{split}\\
&=\summ \thom \pm(x) + \log \pi_\vtho(x)+1\\
&=-\log Z_\vtho +1.
}
 Suppose that we perturb the distribution $\pi_\vtho$ to $p$. Then, because $\int_{\xset}(\pi_\vtho(x)-p(x))dx=1-1=0$, the inner product $\langle \frac{\delta\losa}{\delta p}, p-\pi_\vtho \rangle$ becomes zero, where the inner product is defined as $\langle f, g\rangle=\int_{\xset} f(x)g(x)dx$. Therefore, $\pi_\vtho$ is the optimal distribution for the problem because of the convexity of $\losa$. If $\lm$ is sufficiently large, the feature mean $\fm(\pi_\vtho)$ for the optimal distribution is close to the target value $\mm$ from \eq{eq-tho}. 

\subsection{    Greedy minimization  }
\label{sec-greedy}
Let us consider the following greedy construction of the solution for the minimization problem of $\losa$. Let $r\tt$ be a distribution that is selected at time $T$ and $p$ be the distribution to be constructed as the weighted average of the sequence of distributions $(r\tt)$: 
\AL{
p(x) = \frac{\sum_{T=0}^{\tmax} {\rho}\tt r\tt(x)}{\sum_{T=0}^{\tmax}{\rho}\tt},\LB{eq-search-space}
}
 where $\rho\tt$ are given as a fixed sequence of parameters. We greedily construct $p$ by choosing $r\tt$ for each time step $T=1,\ldots,\tmax$ by minimizing the tentative loss value $\losa\tt\equiv\losa(p\tt)$ for  
\AL{
p\tt(x)=\sumt\rst r\ts(x),
}
 where $\rst=\rho\ts/(\sumt \rho\ts)$. The feature mean $\fm(p\tt)$ for each step is also represented as a weighted sum: 
\AL{
\fm(p\tt)=\sumt\rst \fm(r\ts).
}
 For particular types of weight sequences $(\rho\tt)$, the feature means can be iteratively computed based on previous time instances as follows: 
\AL{
\begin{split}
\fm(p\tt)&=\fm(p\tp)\\
&\quad+\ups\tt(\fm(r\tt)-\fm(p\tp)).\LB{eq-update-fm}
\end{split}
}
 For example, if the coefficients geometrically decay at the rate $\rho\in(0,1)$ as $\rho\tt=\rho^{\tmax-T}$ for $T>0$ and $\rho^{(0)} = \rho^{\tmax} / (1-\rho)$, we can update the feature mean with $\ups\tt=1-\rho$. If $\rho\tt$ is constant over $T$, then we set $\ups\tt=\frac{1}{T+1}$.

We approximate the entropy term $H(p\tt)$ by the weighted sum of the components: 
\AL{
\ten\tt=\sum_{t=0}^{T} \rst H(r\ts).\LB{eq-ten}
}
 Then, similarly, the update equation for $\ten$ is obtained as 
\AL{
\ten\tt=\ten\tp+\ups\tt(\ent(r\tt)-\ten\tp).\LB{eq-update-ten}
}
 The minimization problem to be solved at time $T$ is given as follows: 
\AL{
\tloa\tt &=\AS{\summ\frac{1}{2}\lm(\fm(p\tt)-\mm)^2} - \ten\tt.\LB{eq-tloa}
}
 From the convexity of entropy, $\ten\tt$ is the lower bound of $H(p\tt)$; hence, $\tloa\tt$ is the upper bound of the original target function $\losa\tt$. 

The step of \eq{eq-opt-entropic} is derived from the greedy construction of the solution that minimizes $\tloa$ as follows. By using Eqs.~(\ref{eq-update-fm}) and (\ref{eq-update-ten}), the target function can be rewritten as 
\AL{ 
\SP{
    \tloa\tt &=
    \AS{\summ\frac{1}{2}\lm\AS{(\fm(p\tp)-\mm)
    \ASBR{\ASBR{\\&\qquad}}
    {}+\ups\tt(\fm(r\tt)-\fm(p\tp))}^2}
}
\NN
&\quad- \AS{\ten\tp+\ups\tt(\ent(r\tt)-\ten\tp)}.
}
 For all $T$, let us define  
\AL{
\am\tt \equiv \lm(\fm(p\tt)-\mm).\LB{eq-am}
}
 Then, $\tloa\tt$ also follows the update rule similar to \eq{eq-update-fm} up to the linear term with respect to $\ups\tt$: 
\AL{
\SP{
    \tloa\tt&=\AS{\summ\frac{1}{2\lm}\AS{\am\tp
    \ASBR{\ASBR{\\&\qquad}}
    {}+\ups\tt\lm(\fm(r\tt)-\fm(p\tp))}^2}\\
    &\quad- \AS{\ten\tp+\ups\tt(\ent(r\tt)-\ten\tp)}
}\LB{eq-tloa-decomp1} \\
\SP{
    &=\tloa\tp \\
    &\quad+ \ups\tt \summ\am\tp(\fm(r\tt)-\fm(p\tp))
    \\&\quad
    {}- \ups\tt(\ent(r\tt)-\ten\tp) +O((\ups\tt)^2)
}\\
\SP{
    &=\tloa\tp \\
    &\quad + \ups\tt \AS{\summ\am\tp\fm(r\tt)-\ent(r\tt)}\\
    &\quad - \ups\tt \AS{\summ\am\tp\fm(p\tp)-\ten\tp}\\
    &\quad +O((\ups\tt)^2).}
\LB{eq-update-tloa}
}
 If $\ups\tt$ is small, by neglecting the higher-order term $O((\ups\tt)^2)$, the optimization problem at time $T$ can be reduced to a minimization of $\loss\tp(r\tt)$, where 
\AL{
\loss\tp(q)=\AS{\summ\am\tp\fm(q)}-H(q).
}

From \eq{eq-update-fm} and (\ref{eq-am}), the coefficients $\am\tt$ follow the update rule as 
\AL{
&\quad\am\tt\NN
&=\am\tp+\ups\tt\lm(\fm(r\tt)-\fm(p\tp))\\
&=\am\tp+\ups\tt\AS{\lm(\fm(r\tt)-\mm)-\am\tp},
}
 which is equivalent to \eq{eq-update-entropic}.

In summary, the proposed algorithm is derived from the greedy construction of the solution of the optimization problem $\min \losa(q)$: it selects a distribution $r\tt$ in each step by minimizing the tentative loss. The minimization can be reduced to the minimization of $\loss\tp(r\tt)$ that has a parametrized form, and the algorithm updates the parameters by calculating their temporal differences. The algorithm thus derived, referred to as entropic herding, has a form similar to original herding. 

\section{ Entropy maximization in {\herding}  }
\label{sec-ent-max}
Let us further study the relationship between entropic herding, original herding, and the maximum entropy principle.  

\subsection{    Target function minimization and parameter learning of the Gibbs distribution  }
\label{sec-sch}
Figure \ref{schematic} is a schematic of the relationship between the $\losa$ minimization and parameter learning of the Gibbs distribution. Each probability distribution on $\xset$ is represented as a filled circle in the figure. Each moment condition can be represented as a vector $\vec{\mu}$, and the vector space consists of possible target vectors projected onto the horizontal direction of the schematic. The vertical axis represents the entropy of the distribution. For each condition, the set of distributions satisfying the moment condition $\pm(p)=\mm$ for all $m$ is denoted by a dotted line. The top-most point of the line is the distribution with the highest entropy value under the condition. In other words, for each moment condition $\vec{\mu}^i$, the Gibbs distribution $\pi_{\theta^{i}}$ is obtained by entropy maximization (red arrow) along the corresponding dotted line. The target distribution $\pi_{\hat\theta}$ adopted by the maximum entropy principle is the solution of entropy maximization under the moment condition of interest. The top-most points for various conditions form a family of Gibbs distributions represented as \eq{eq-MRF} and denoted by the black solid curve. The parameter learning finds this target along this curve (green arrow). In contrast, the entropic herding finds the target directly by minimizing the target function $\losa$ that represents both moment condition and entropy maximization (blue arrow). The search space, which is represented by \eq{eq-search-space}, is different from the family of Gibbs distrubitions. 

\begin{lfigure}[ htbp ]
\begin{center}
 
\includegraphics[ width=0.8\textwidth ]{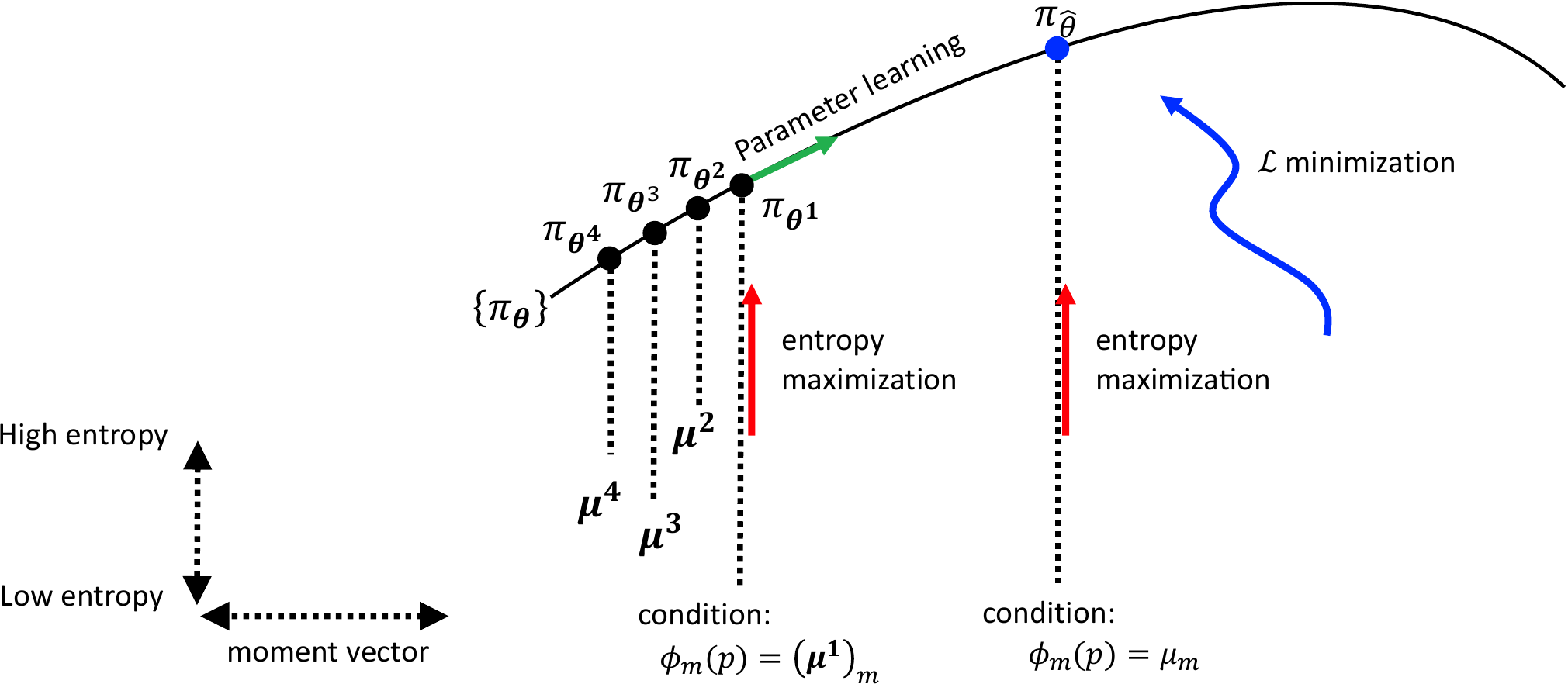} 
\end{center}
\caption{Schematic of the relationship between the $\losa$ minimization and the maximum entropy principle}
\label{schematic}
\end{lfigure} 

\subsection{    Tractable entropic herding  }
\label{sec-feasible}
From \eq{eq-update-tloa},  
\AL{
    \SP{
    \tloa\tt &\le \tloa\tp \\
    &\quad + \ups\tt\AS{\loss\tp(r\tt)-\loss\tp(p\tp)}
    }
}
 holds if we neglect $O((\ups\tt)^2)$ term, because $\ten\tp\le\ent(p\tp)$. If we can find the optimum distribution at time $T$, then $\loss\tp(r\tt)$ is less than $\loss\tp(p\tp)$ unless $p\tp$ itself is the optimum. Then, $\tloa\tt$ will be smaller than $\tloa\tp$. This corresponds to the theorem for {\herding} that $\sum ((1/\tmax)\sum_{T=1}^{\tmax}\pm(x\tt)-\mm)^2$ is bounded if we can find the optimum $x\tt$ in each optimization (\eq{eq-opt-point}) \mycite{Herding}.

Let $\pi_{\va\tp}$ be the Gibbs distribution defined as \eq{eq-MRF} with parameters $\vth = \va\tp$. The optimization problem of minimizing $\loss\tp(q)$ is equivalent to minimizing the Kullback–Leibler (KL) divergence $KL(q\|\pi_{\va\tp})$, the solution of which is given by $q=\pi_{\va\tp}$ as follows:  
\AL{
&\quad KL(q\|\pi_{\va\tp})\NN
&=\int q(x)\log\frac{q(x)}{\pi_{\va\tp}(x)} dx \\
\SP{
    &= \int q(x) \Biggl(\summ\am\tp\pm(x)\Biggr.\\
    &\qquad{}\Biggl. + \log Z_{\va\tp} + \log q(x)\Biggr) dx\\
}\\
&=\summ\am\tp\fm(q) - H(q) + \log Z_{\va\tp}\\
&=\loss\tp(q)+\log Z_{\va\tp}.\LB{eq-opt-kl}
}
 However, the Gibbs distribution is known to produce difficulties in applications such as the calculation of expectations. Fortunately, a suboptimal distribution $r\tt$ may be sufficient for decreasing $\tloa$, like the tractable version of the {\herding} introduced in \mycite{HerdingPOMRF}. Therefore, in practice, we can perform the optimization on a restricted set of distributions, as in variational inference, and allow a suboptimal solution for the optimization step (\eq{eq-opt-entropic}). Let $\qset$ denote the set of candidate distributions. We refer to this tractable version of the proposed algorithm as entropic herding, in contrast to the algorithm using the exact solution $r\tt=\pi_{\va\tp}$. 

\subsection{    Point herding  }
\label{sec-equiv}
Next, we show that the proposed algorithm can be interpreted as an extension of the original herding. Recall that, in the original herding in section \ref{sec-herding}, the optimization problem to generate the sample $x\tt$ and the update rule of weight $\wm\tt$ are represented in \eq{eq-opt-point} and (\ref{eq-update-point}), respectively.

Let us consider the proposed algorithm with the candidate distribution set $\qset$ restricted to point distributions. Then, the distribution $r\tt$ is the point distribution at $x\tt$, which is obtained by solving  
\AL{
x\tt = \argmin_{x\in\xset} \summ\am\tt\pm(x),
}
 because the entropy term of the optimization problem \eq{eq-update-entropic} can be dropped. Let us further suppose that $\ups\tt=\frac{1}{T+1}$ and $\lm$ are the same for all features. We introduce the variable transformation of weights as $\wmp\tn =-\frac{T+1}{\lm}\am\tt$. Then, the above optimization problem is equivalent to \eq{eq-opt-point} with the relationship $\wm=\wmp$. 

The update rule for $\wmp\tt$ is 
\AL{
\SP{
    \wmp\tn &=-\frac{T+1}{\lm}\AS{\am\tp+{}
    \ASBR{\\&\quad}
    \ups\tt\AS{\lm(\fm(r\tt)-\mm)-\am\tp}}
}\\
&=-\frac{T}{\lm}\am\tp-(\fm(r\tt)-\mm)\\
&=\wmp\tt+(\mm-\fm(r\tt)).
}
 Because $\fm(r\tt)=\pm(x\tt)$ holds, this update rule is also equivalent to \eq{eq-update-point}.

Therefore, the original herding can be interpreted as the tractable version of the proposed method by restricting $\qset$ to the point distributions. In the following, we refer to the original algorithm as point herding. In this case, the weights $\rho\tt$ in \eq{eq-search-space} are constant for all $T$ because we set $\ups\tt=\frac{1}{T+1}$. This agrees with the fact that the point herding puts equal weights on the output sequence while taking the average of the features, the difference of which with the input moment is minimized (see \eq{eq-herding-average}).

\subsection{    Diversification of output distributions by dynamic mixing in {\herding}  }
\label{sec-diversify}
The minimizer of \eq{eq-loss} is the Gibbs distribution $\pi_{\vtho}$, where $\vtho$ is given by \eq{eq-tho}, but there is a gap between the distribution obtained from the tractable entropic herding and the minimizer. This is because it solves the optimization problem with the approximate target function and by using sub-optimal greedy updates, as described in Sections \ref{sec-greedy} and \ref{sec-feasible}. Here, we discuss the inherent characteristics of entropic herding for decreasing the gap, which cannot be fully described as greedy optimization.

The target function for greedy optimization in (tractable) entropic herding is $\tloa$ (\eq{eq-tloa}). In deriving $\tloa$ from $\losa$, we introduce an approximation for the entropy term $\ent(p\tt)$ by its lower bound $\ten\tt$, which is the weighted average of the entropy values of the components as defined in Eq.~(\ref{eq-ten}). The gap between $\ent$ and $\ten$ can be calculated using the following equation: 
\AL{
&\quad H(p\tt)\NN
&=-\int p\tt(x)\log p\tt(x)dx\\
&=-\int\sumt\rst r\ts(x)\log p\tt(x)dx\\
\SP{
    &=-\int\sumt\rst r\ts(x)\Biggl(\log(\rst r\ts(x))\Biggr.\\
    &\qquad{} \Biggl. - \log\frac{\rst r\ts(x)}{p\tt(x)}\Biggr)dx
}\\
\SP{
    &=\sumt\rst\ent(r\ts)-\sumt \rst\log\rst\\
    &\quad+\int p\tt(x)\sumt\frac{\rst r\ts(x)}{p\tt(x)}\log\frac{\rst r\ts(x)}{p\tt(x)}dx,
}\\
&=\ten\tt+\entrho\tt-\expe{p\tt}{\entcom\tt(x)},
}
 where $\entrho\tt=-\sumt\rst\log\rst$, $\entcom\tt(x)=-\sumt\frac{\rst r\ts(x)}{p\tt(x)}\log\frac{\rst r\ts(x)}{p\tt(x)}$. Then, the gap is represented by 
\AL{
\ent(p\tt)-\ten\tt=\entrho\tt-\expe{p\tt}{\entcom\tt(x)}.
}
 If the coefficients $\rst$ are fixed, the gap is determined by $\entcom\tt$. If this gap is positive, we have an additional decrease in the true target function $\losa$ compared to the approximate target $\tloa$ of the greedy minimization.

Suppose that $t'$ is a random variable drawn with a probability of $\rho^{t',T}$, and $x$ is drawn from $r^{(t')}$. Then, $\entcom\tt(x)$ can be interpreted as the average entropy of the conditional distribution of $t'$, given $x$. If the probability weights assigned to $x$ by the components $t$, represented as $\rst r\ts(x)$, are unbalanced over $t$, then $\entcom\tt(x)$ is small so that the gap $\ent(p\tt)-\ten\tt$ is large. In contrast, if $r\ts$ are the same for all $t$ so that $r\ts=p\tt$, then $\entcom\tt(x)=\entrho\tt$ for all $x$. Then, the gap $\ent(p\tt)-\ten\tt$ becomes zero, which is the minimum value. That is, we achieve an additional decrease in $\losa$ if the generated sequence of distributions $r\ts$ is diverse. This can happen if $\va\ts$ are diverse, as the optimization problems in Eq.~(\ref{eq-opt-entropic}), defined by the parameters $\va\ts$, become diverse. In the original herding, the weight dynamics is weakly chaotic, and thus, it can generate diverse samples. We also expect that the coupled system of Eq.~(\ref{eq-opt-entropic}) and (\ref{eq-update-entropic}) in the high-dimensional space will exhibit complex behavior that achieves diversity in $\{\va\ts\}$ and $\{r\ts\}$. The extra decrease is bounded from above by $\entrho\tt$, which usually increases as $T$ increases. For example, if $\rst$ is constant over $t$, then $\entrho\tt=\log(T+1)$.

In summary, the proposed algorithm, which is a generalization of {\herding}, minimizes the loss function $\losa$ using the following two means: \begin{itemize} \item[(a) ] Explicit optimization: greedy minimization of $\tloa$, which is the upper bound of $\losa$, by solving Eq.~(\ref{eq-opt-entropic}) with the entropy term. \item[(b) ] Implicit diversification: extra decrease of $\losa$ from the diversity of the components $r\tt$ achieved by the complex behavior of the coupled dynamics of the optimization and the update step (Eq.~(\ref{eq-opt-entropic}) and (\ref{eq-update-entropic}), respectively). \end{itemize} The complex behavior of the herding contributes to the optimization implicitly through an increase of the gap $\ent-\ten$ by diversification of the samples. In addition, the proposed entropic herding can improve the output through explicit entropy maximization.

Thus, we can generalize the concept of herding by regarding it as an iterative algorithm having two components as described above. 

\subsection{    Probabilistic modeling with entropic herding   }
\label{sec-adv}
The extension of herding to the proposed entropic herding expands its application as follows.

The first important difference introduced by the extension is that the output becomes a mixture of probabilistic distributions. Thus, we can consider the use of entropic herding in probabilistic modeling. Let $\pi_{\hat\vth}$ be the distribution obtained from the maximum entropy principle. For each time step $T$, the tentative distribution $p\tt=\sumt\rst r\ts(x)$ is obtained by minimizing $\losa$, which represents the maximum entropy principle. Then, we can expect it to be close to $\pi_{\hat\vth}$ and use it as the probabilistic model $\poutput$ derived from the data. It should be noted that the model thus obtained $\poutput=p\tt$ is different from $\pi_{\hat\vth}$. It includes the difference between $\pi_{\hat\vth}$ and the minimizer $\pi_{\vtho}$ of $\losa$ by the finiteness of the weights $\lm$ in Eq.~(\ref{eq-loss}) and includes the difference between $\pi_\vtho$ and $\poutput$ because of the inexact optimization. We can also obtain another model $\poutput$ by further aggregating the tentative distributions $p^{(T)}$, expecting additional diversification. For example, we can use the average $\poutput=\frac{1}{\tmax}\sum_{T=1}^{\tmax}p\tt$ as the output probabilistic model. This is again a mixture of the output sequence $(r\tt)$ that only differs from $p^{(\tmax)}$ in the coefficients. A more specific method for output aggregation is described in Appendix \ref{sec-implementation}.

If $\qset$ is set appropriately and the number of components is not too large, the probability density function of the output $\poutput$ can be easily calculated. The availability of the density function value enables us to use likelihood-based model evaluation tools in statistics and machine learning, such as cross-validation. This can also be used to select the parameters of the algorithm such as $\lm$. 

If $\xset$ is continuous, we cannot use point herding to model the probability density because the sample points can only represent the non-smooth delta functions. Even if $\xset$ is discrete, when the number of samples $T$ is much smaller than the number of possible states $\lvert \xset \rvert$, the zero-probability weight should be assigned to a large fraction of states. The zero-probability weight causes infiniteness when the log-likelihood is calculated. In addition, the probability mass can only take a multiple of $1/T$, which causes inaccuracy especially for a state with a small probability mass. Many samples are required to obtain accurate probability mass values for likelihood-based methods. In contrast, the output of entropic herding has sample efficiency because each output component can assign non-zero probability mass to many states. This is demonstrated by numerical examples in the next section. 

Another important difference is that entropic herding explicitly optimizes the entropy term. Therefore, entropic herding can generate a distribution that has a higher entropy value than the point herding output. In addition, we can control how it focuses on the entropy or moment information of the data by changing the parameter $\lm$ in the target function. While point herding diversifies the sample points by its complex behavior, the optimization problem solved in each time step does not explicitly contain $\lm$ so that the balance between entropy maximization and moment error minimization is not controlled.

Moreover, if we restrict the candidate distribution set $\qset$ to be appropriately simple, we can easily obtain any number of random samples from $\poutput$ by sampling from $r\ts$ for each randomly sampled index $t$. This sample generation process can also be parallelized because each sample generation is independent. For example, we can use independent random variables following normal distributions or Bernoulli distributions as $\qset$, as shown in the numerical examples below.

Theoretically, the exact solution of the problem \eq{eq-opt-entropic} is obtained as $\pi_{\va\tp}$. However, we can enjoy the advantages of the entropic herding described above when $\qset$ is restricted to simple, analytic, and smooth distributions, even though the optimization performance is suboptimal.

In summary, entropic herding has both the merits of herding and probabilistic modeling using the distribution mixture.

\section{ Numerical Examples  }
\label{sec-expe}
In this section, we present numerical examples of entropic herding and its characteristics. We also present comparisons between entropic herding and several conventional methods. A detailed description of the experimental procedure is provided in Appendix \ref{sec-implementation}. The definitions of the detailed parameters, not described here, are also provided in the Appendix. 

\subsection{  One-dimensional bimodal distribution }

As a target distribution, we take a one-dimensional bimodal distribution 
\AL{
\pbi(x) &= \exp(-(x^4 - 3 x^2 + 0.5 x)) / Z,
}
 where $Z$ is the normalizing factor. The histogram of this distribution is shown by the orange bars in each panel of Fig.~\ref{hnm_hist}. For this distribution, we consider a set of four polynomial feature functions for {\herding} as follows: 
\AL{
\phi_i(x) = x^i\quad^\forall i\in\{1,2,3,4\}.
}
 From the moment values $\mm=\expe{\pbi}{\pm(x)}$, the maximum entropy principle reproduces the target distribution $\pbi$. Using this feature set, we ran point herding and entropic herding. Figure \ref{hnm_traj} shows the time series of the generated sequence, and Figure \ref{hnm_hist} shows the histogram compared to the target distribution $\pbi$. For entropic herding, we set the candidate distribution set $\qset$ as 
\AL{
\qset=\{\mathcal{N}(\mu,\sigma^2)\mid \mu\in\Real, \sigma > 0.01\},
}
 where $\mathcal{N}(\mu,\sigma^2)$ denotes a one-dimensional normal distribution with mean $\mu$ and variance $\sigma^2$.

Figures~\ref{hnm_traj_point} and \ref{hnm_hist_point} are for point herding. We observe the periodic sequence, and this causes a large difference in the histogram around $x=0$. Point herding has complex dynamics of the high-dimensional weight vector that achieves random-like behavior even in the deterministic system. However, in this case, the dimension of the weight vector is only four. This low dimensionality of the system can be a cause of the periodic output.

In contrast, Figs.~\ref{hnm_traj_entropy} and \ref{hnm_hist_entropy} show the results for entropic herding. Although this trajectory is periodic as well, the output in each step is a distribution with a positive variance such that it can represent a more diverse distribution. The difference in the histograms is reduced, as shown in Fig.~\ref{hnm_hist_entropy}.

In addition, the periodic behavior of point herding can be mitigated by introducing a stochastic factor into the system. Figures~\ref{hnm_traj_metro} and \ref{hnm_hist_metro} show the results of point herding with the stochastic Metropolis update (see Appendix \ref{sec-implementation} for details). We observed some improvements in the difference in the histogram. The stochastic update has the role of increasing the diversity of samples, as described in (b) in Section \ref{sec-diversify}. Thus, this algorithm is conceptually in line with entropic herding, although not included in the proposed mathematical formulation. Entropic herding, which generates a sequence of distributions, can also be combined with stochastic update. The results of this combination are shown in Figs.~\ref{hnm_traj_entropy_jump} and \ref{hnm_hist_entropy_jump}, respectively. We can see that the periodic behavior of point herding is diversified. The improvements in the difference between the histograms are significant especially for point herding (Fig.~\ref{hnm_hist_point} and \ref{hnm_hist_metro}).  

\begin{lfigure}[ htbp ]
\begin{center}
 
\begin{minipage}[b ]{\mywidth} {\includegraphics[ width=\textwidth ]{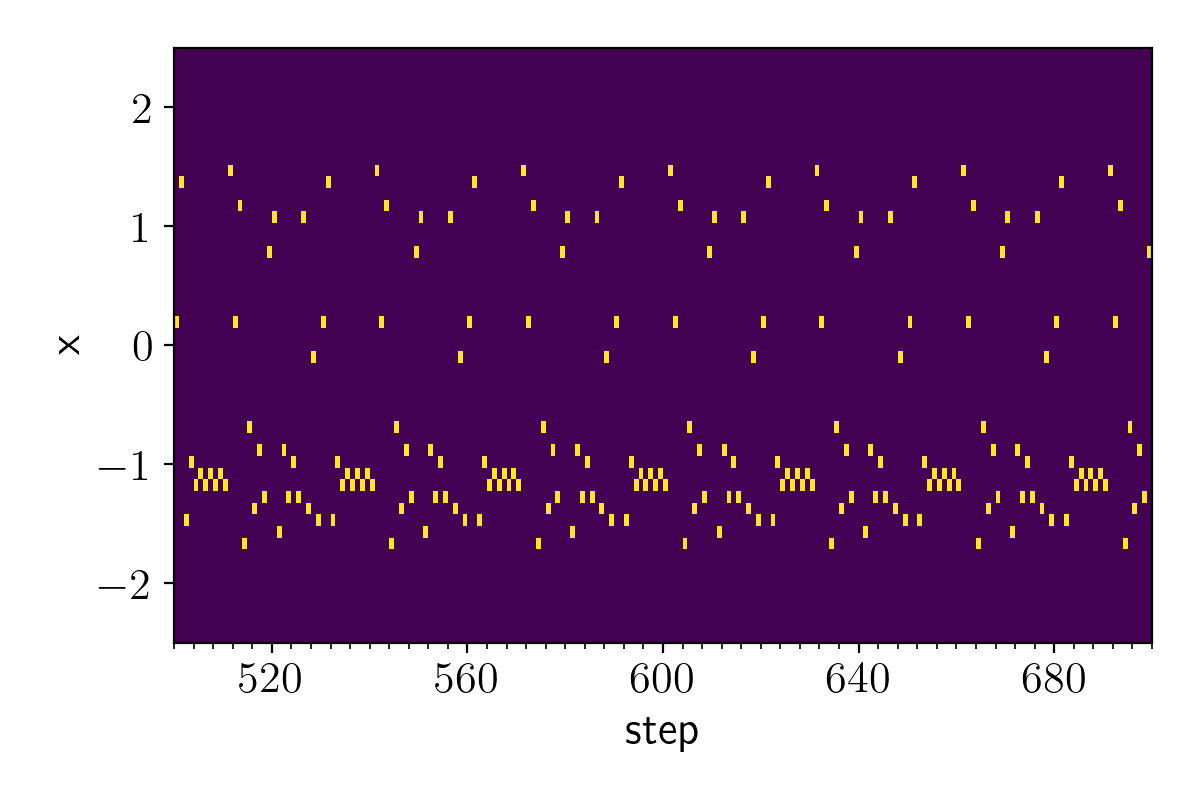} \subcaption{} \label{hnm_traj_point} }\end{minipage} 
\begin{minipage}[b ]{\mywidth} {\includegraphics[ width=\textwidth ]{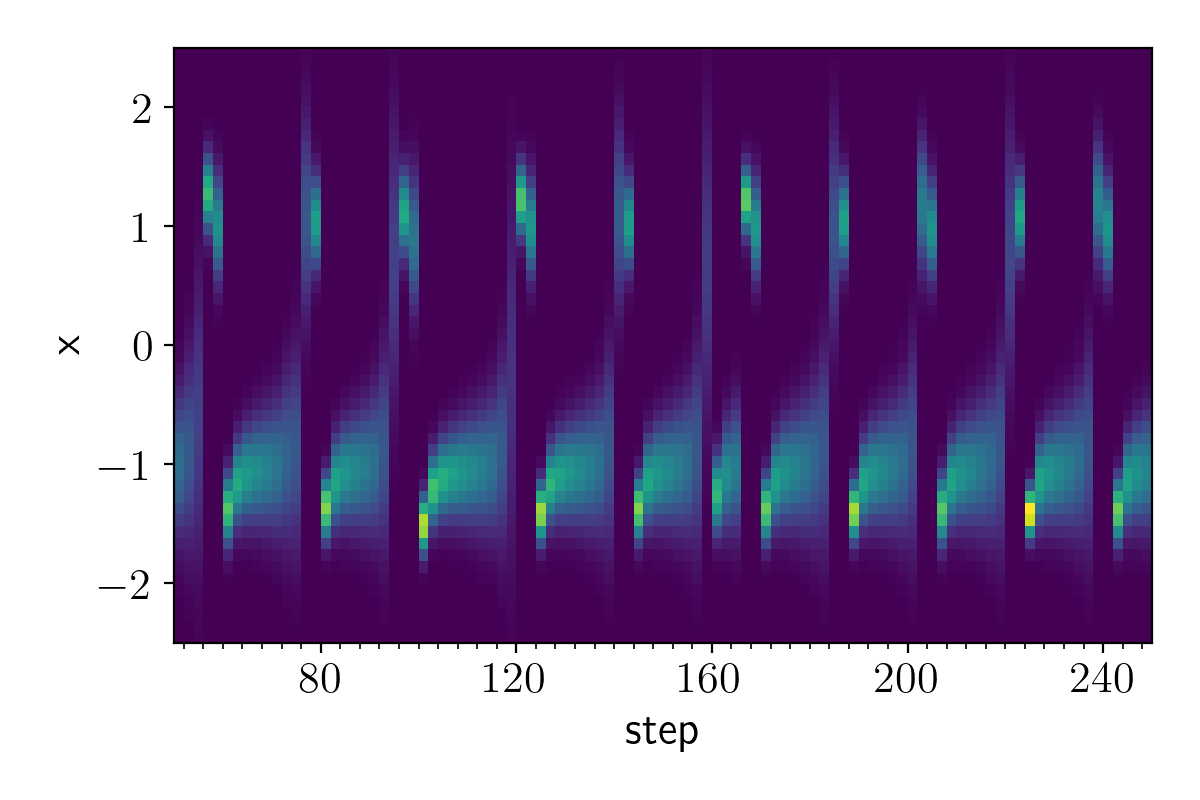} \subcaption{} \label{hnm_traj_entropy} }\end{minipage} \\ 
\begin{minipage}[b ]{\mywidth} {\includegraphics[ width=\textwidth ]{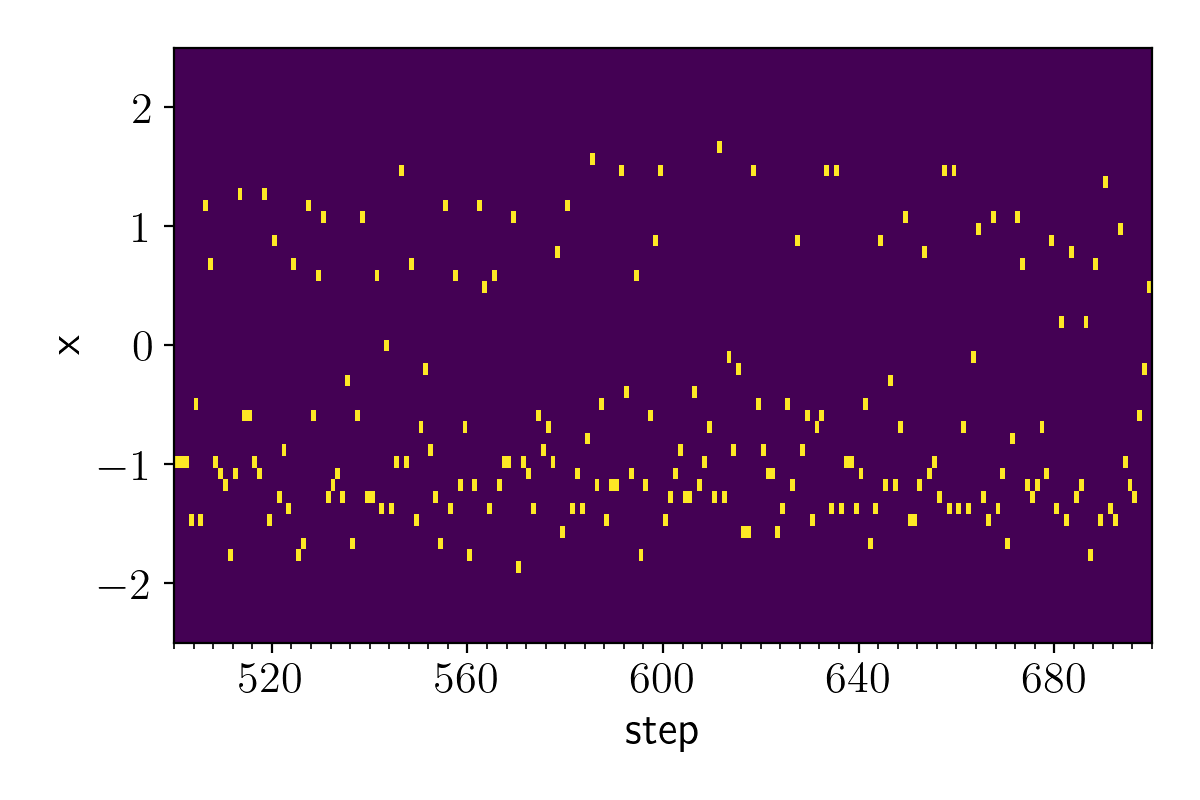} \subcaption{} \label{hnm_traj_metro} }\end{minipage} 
\begin{minipage}[b ]{\mywidth} {\includegraphics[ width=\textwidth ]{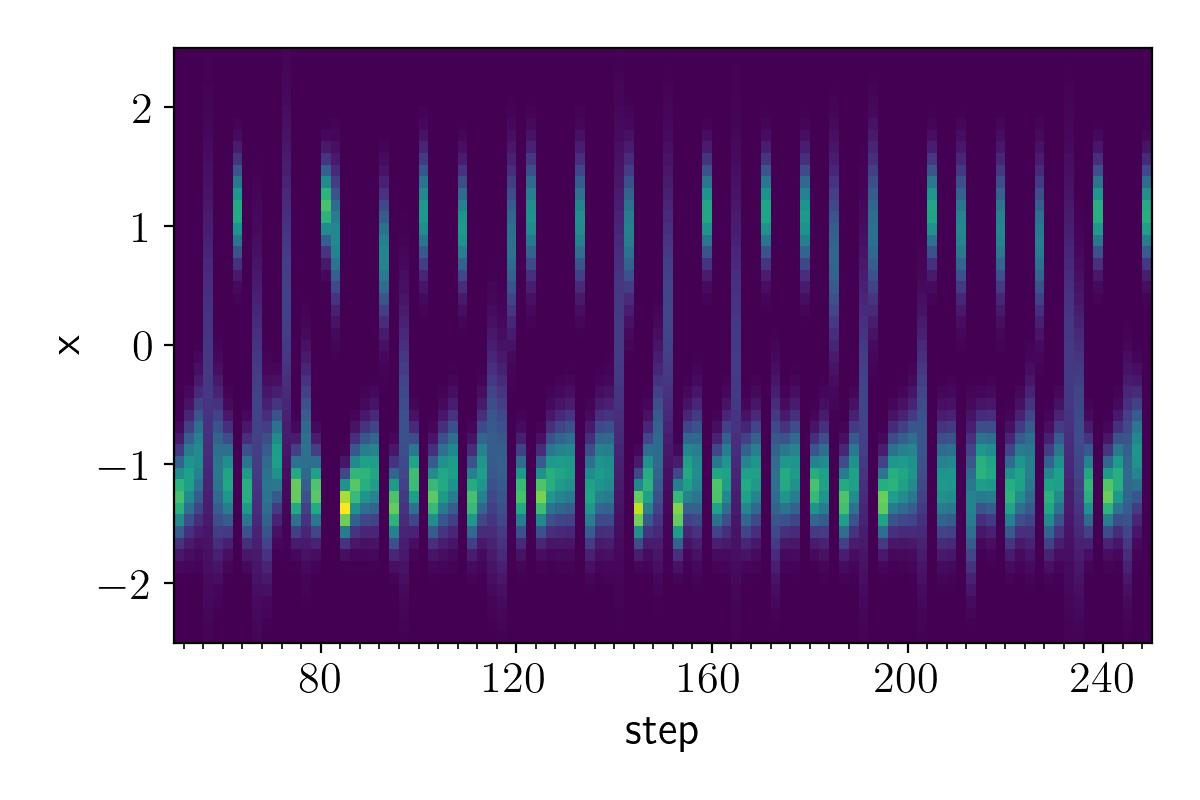} \subcaption{} \label{hnm_traj_entropy_jump} }\end{minipage} 
\end{center}
\caption{Output sequences of \subref{hnm_traj_point} point herding and \subref{hnm_traj_entropy} entropic herding for the one-dimensional bimodal distribution $\pbi$. The horizontal axis represents the index of the step denoted by $t$ in the text. The vertical axis represents the space $\xset=\Real$ and is divided into bins of width $0.1$. The color represents the probability mass for each bin, where yellow corresponds to a larger mass. The probability mass values were normalized to $[0, 1]$ in each plot before being colorized. Panels \subref{hnm_traj_metro} and \subref{hnm_traj_entropy_jump} show output sequences of point herding and entropic herding with stochastic optimization steps, respectively}
\label{hnm_traj}
\end{lfigure} 

\begin{lfigure}[ htbp ]
\begin{center}
 
\begin{minipage}[b ]{\mywidth} {\includegraphics[ width=\textwidth ]{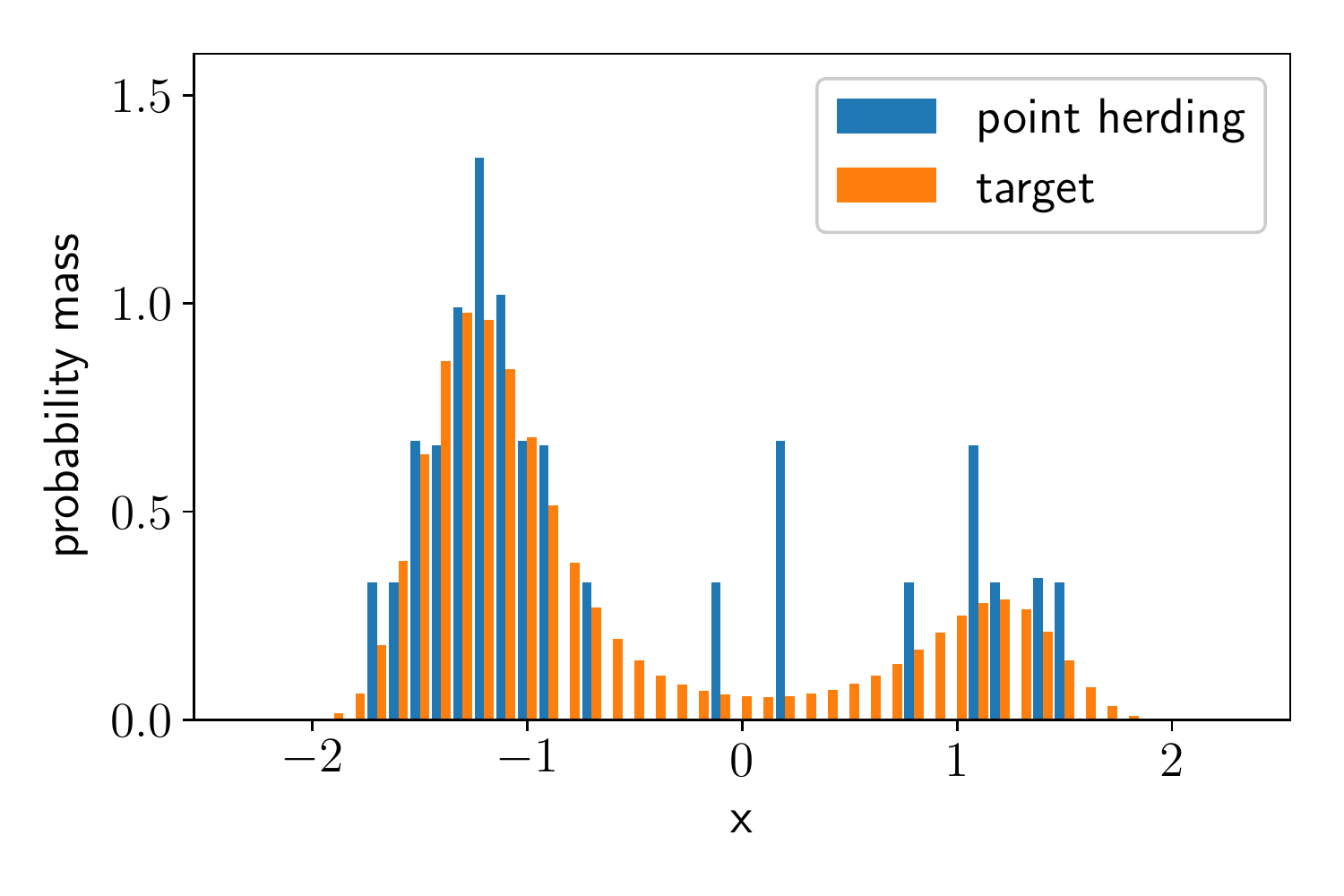} \subcaption{} \label{hnm_hist_point} }\end{minipage} 
\begin{minipage}[b ]{\mywidth} {\includegraphics[ width=\textwidth ]{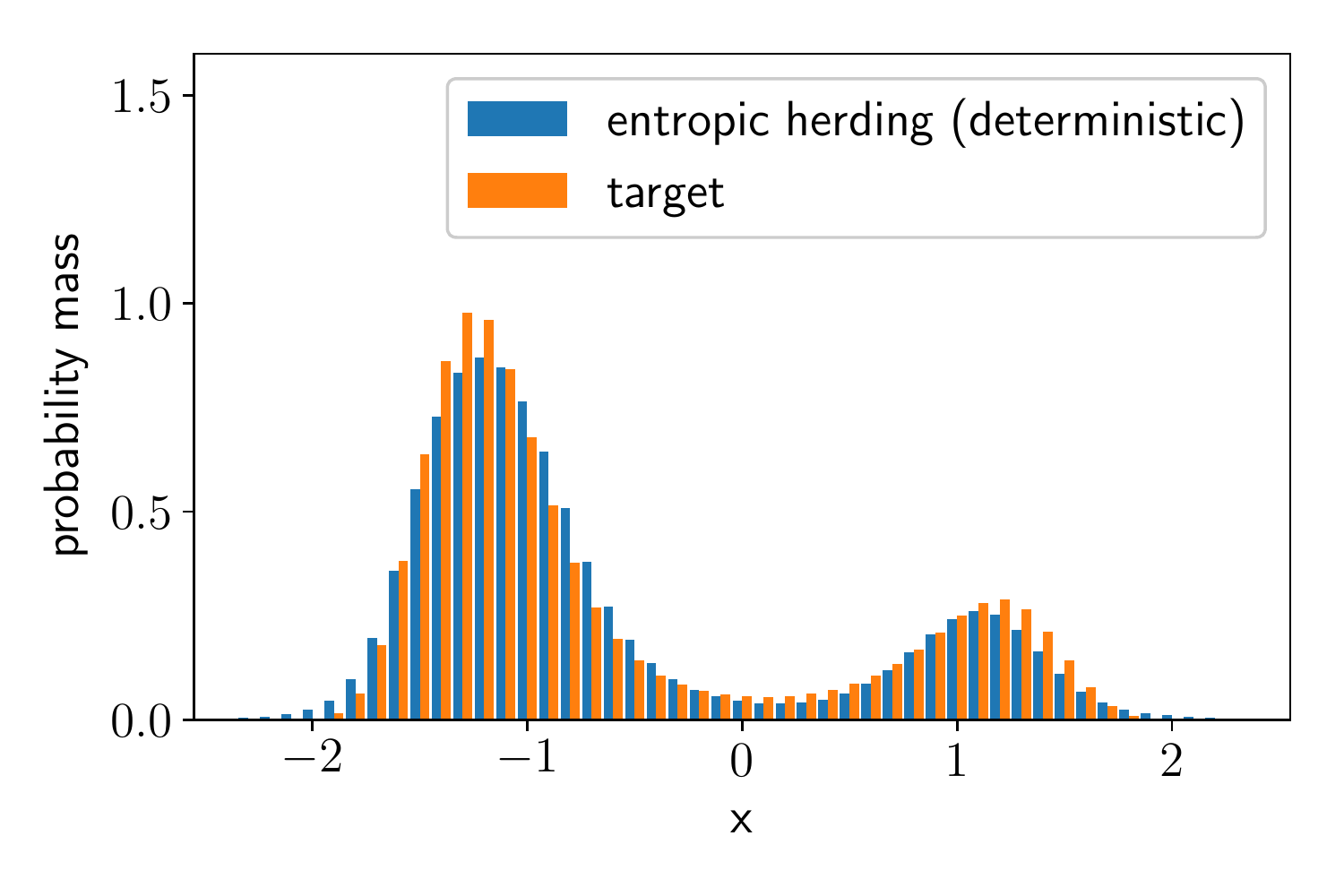} \subcaption{} \label{hnm_hist_entropy} }\end{minipage} \\ 
\begin{minipage}[b ]{\mywidth} {\includegraphics[ width=\textwidth ]{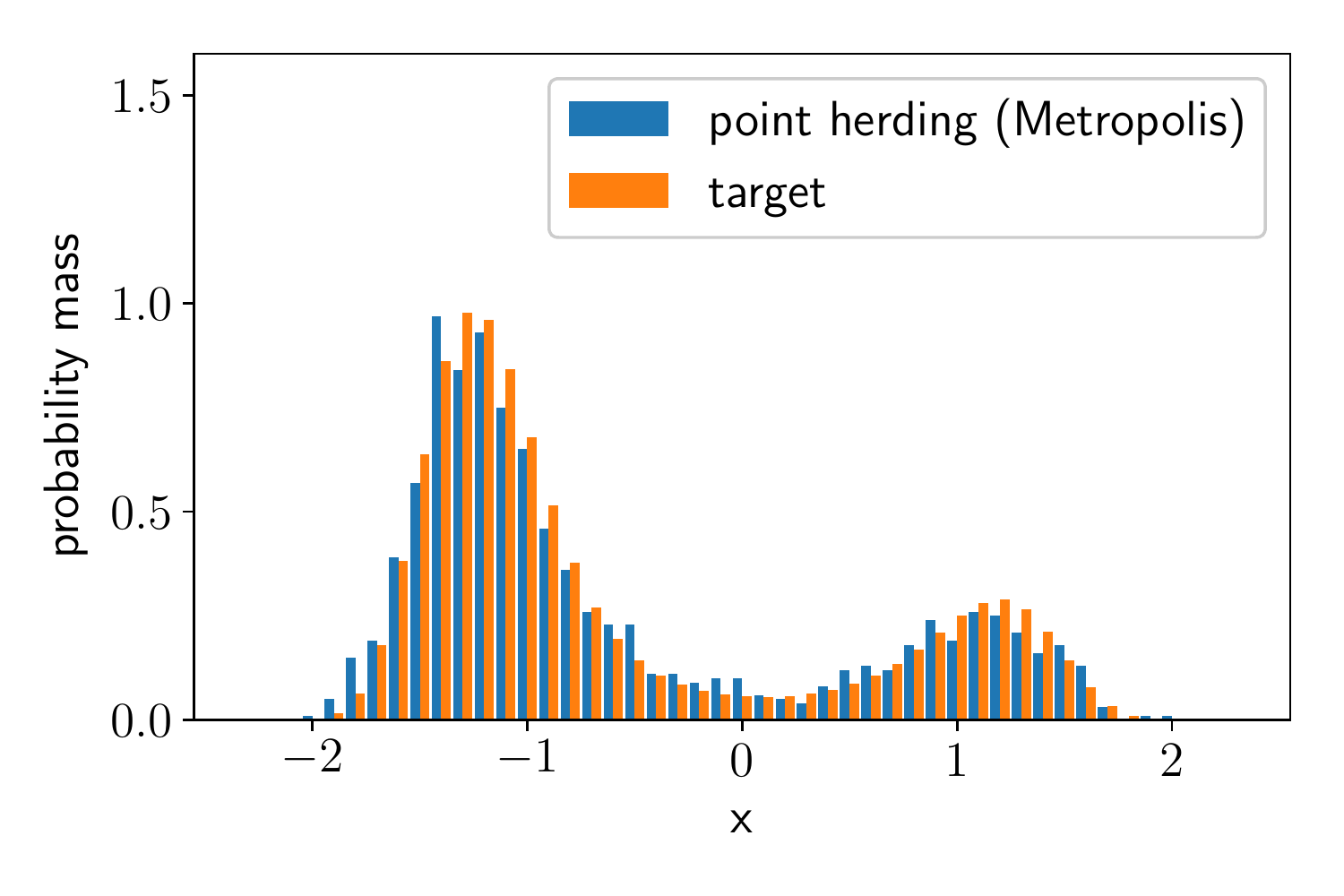} \subcaption{} \label{hnm_hist_metro} }\end{minipage} 
\begin{minipage}[b ]{\mywidth} {\includegraphics[ width=\textwidth ]{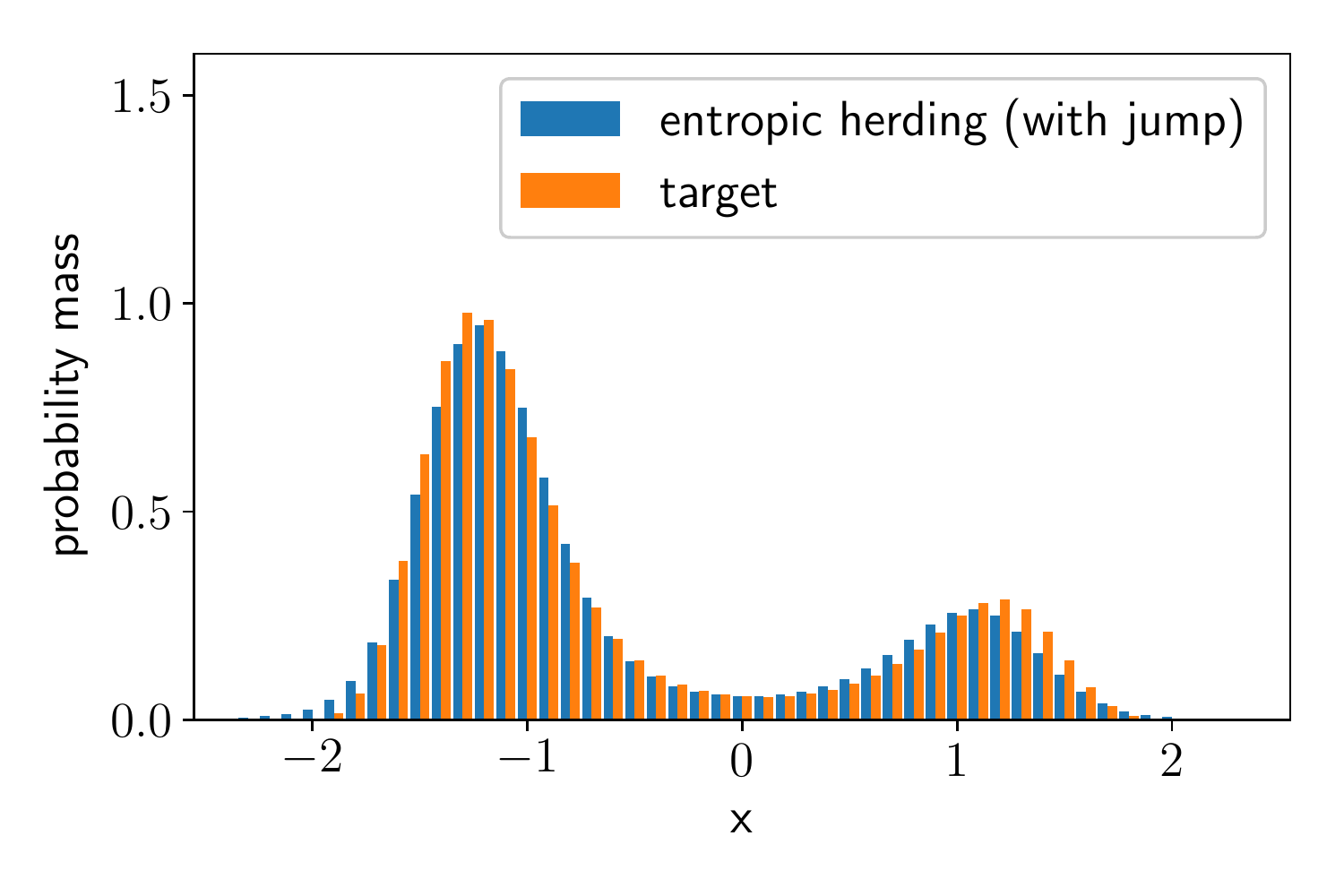} \subcaption{} \label{hnm_hist_entropy_jump} }\end{minipage} 
\end{center}
\caption{Histogram of the output distribution of \subref{hnm_hist_point} point herding and \subref{hnm_hist_entropy} entropic herding compared to the one-dimensional bimodal target distribution $\pbi$. The horizontal axis represents the space $\xset=\Real$ and is divided into bins of width $0.1$. The vertical axis represents the probability mass of each bin. Panels \subref{hnm_hist_metro} and \subref{hnm_hist_entropy_jump} show the output distributions of point herding and entropic herding with stochastic optimization steps, respectively}
\label{hnm_hist}
\end{lfigure} 

\subsection{  Boltzmann machine }

Next, we present a numerical example of entropic herding for a higher dimensional distribution. We consider the Boltzmann machine with $N=10$ variables. The state vector is $\vx=(x_1,\ldots,x_N)\tr$, where $x_i\in\{-1,+1\}$ for all $i\in\{1,\ldots, N\}$. The target distribution is defined as follows: 
\AL{
\pbm(\vx) = \frac{1}{Z}\exp\AS{-\sum_{i<j}W_{ij}x_ix_j},\label{eq-BM}
}
 where $W_{ij}$ are the randomly drawn coupling weights, and $Z$ is the normalizing factor. For simplicity, we did not include bias terms. For this distribution, we use a set of feature functions $\{\phi_{ij} \mid i<j\}$, as follows: 
\AL{
\phi_{ij}(x) = x_ix_j.
}
 The dimension of the weight vector is $N(N-1)/2$. With this feature set, we ran entropic herding and obtained the output sequence of 320 distributions. The input target value $\vec{\mu}$ was obtained from the feature mean $\expe{\pbm}{\phi_{ij}}$ calculated with the definition \eq{eq-BM}. We used the candidate distribution set $\qset$, defined as the set of the following distributions: 
\AL{
\SP{
&x_i = \left\{\begin{array}{l}
+1 \quad (\text{with probability } p_i)\\
-1 \quad (\text{with probability } 1-p_i)\\
\end{array}\right.\\
&\quad^\forall i\in\{1,\ldots, N\},
}
}
 where  $p_i\in [0, 1]$ are the parameters and $x_1,\ldots, x_N$ are independent. We also generated 320 identically and independently distributed random samples from the target $\pbm$ for comparison. 

Figure \ref{hbm_pdf} is a scatter plot comparing the probability mass between the output and the target distribution for each state. Figure~\ref{hbm_pdf_choice} shows the result for the empirical distribution of random samples. The Boltzmann machine has 1024 states for $N=10$, but we generated only 320 samples in this experiment. Therefore, the mass values for the empirical distribution have discrete values. In addition, there are many states with mass values of zero. We can observe the large difference in mass values between the empirical distribution and the target distribution.

In contrast, Fig.~\ref{hbm_pdf_eher} shows the results of entropic herding. As in the above example for bimodal distribution, each output of entropic herding can represent a more diverse distribution than a single sample. For most of the states that have large probability weights, the difference in mass value is within a factor of 1.5. This is much smaller than that in the case of the random samples. 

\begin{lfigure}[ htbp ]
\begin{center}
 
\begin{minipage}[b ]{\mywidth} {\includegraphics[ width=\textwidth ]{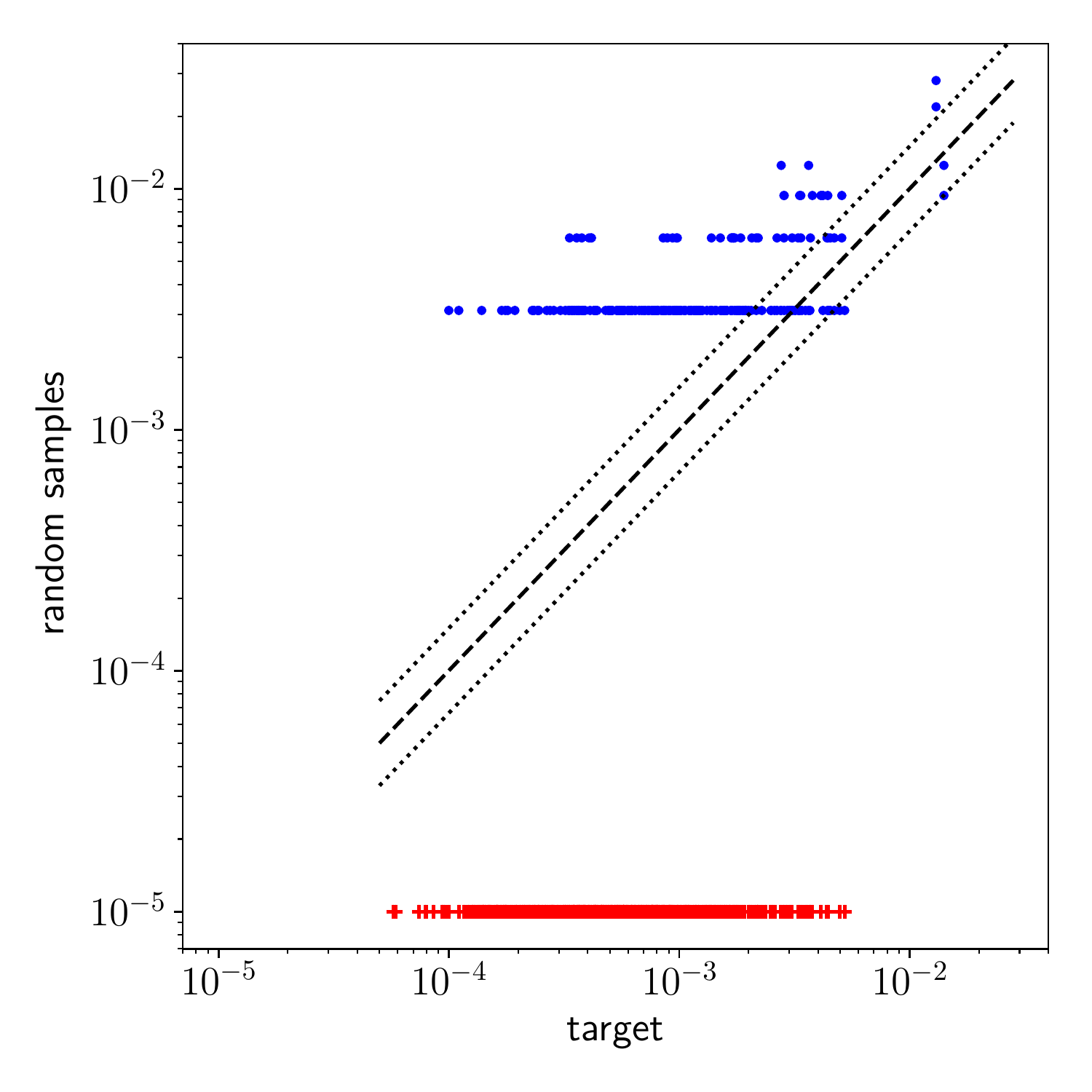} \subcaption{} \label{hbm_pdf_choice} }\end{minipage} 
\begin{minipage}[b ]{\mywidth} {\includegraphics[ width=\textwidth ]{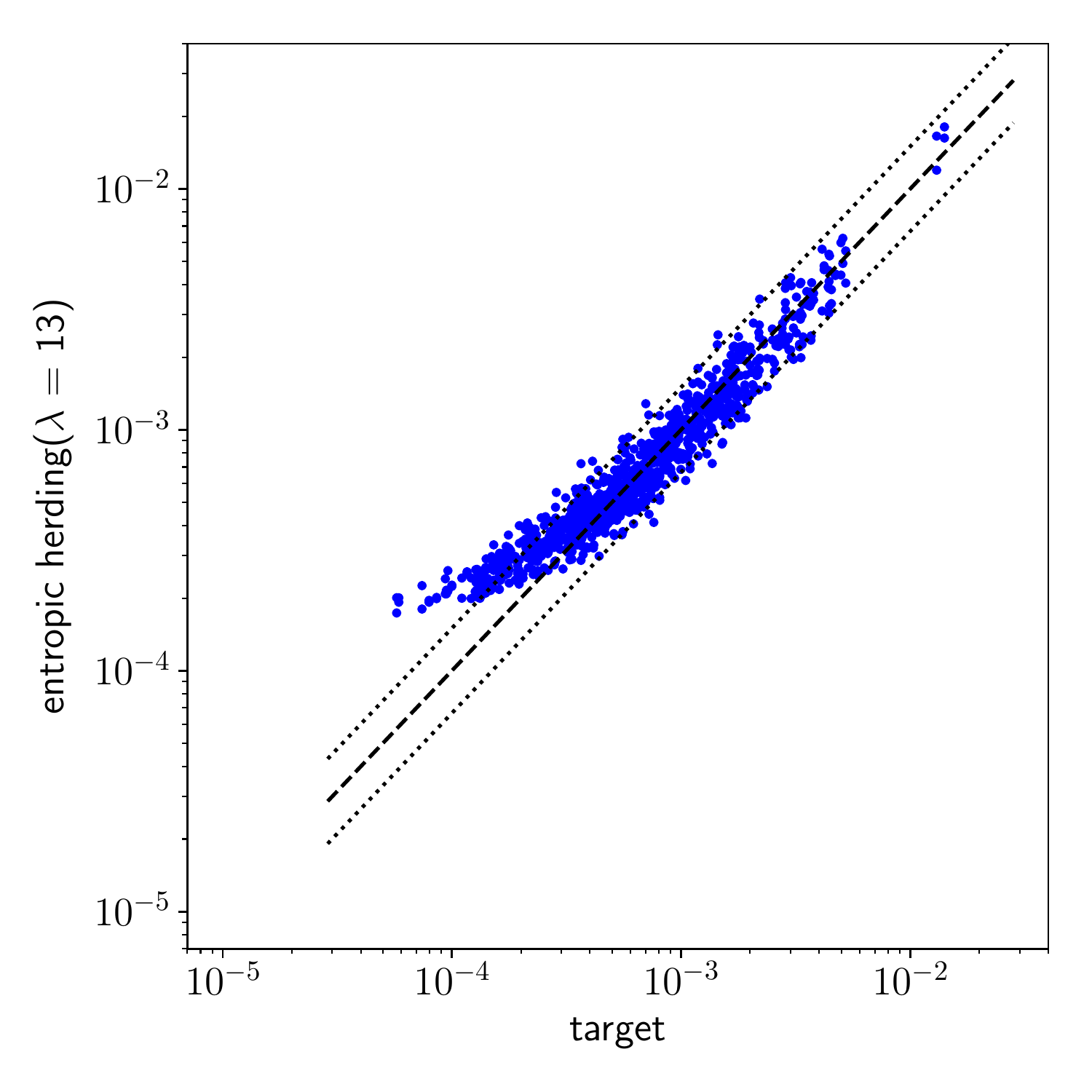} \subcaption{} \label{hbm_pdf_eher} }\end{minipage} 
\end{center}
\caption{Probability mass values of \subref{hbm_pdf_choice} the empirical distribution of random samples and \subref{hbm_pdf_eher} the output distribution of the entropic herding for the Boltzmann machine compared to the target distribution. Each state of the target Boltzmann machine is represented by a point. The states with zero probability mass are represented by a small value of $10^{-5}$ and in red plus signs. The vertical axis represents the probability mass of the obtained distribution. The horizontal axis represents those of the target distribution. A logarithmic scale was used for both axes. The point on the dashed diagonal line indicates that the two probability masses are identical. The point between two dotted lines indicates the difference in mass value is within a factor of 1.5. The result with $\lambda=13$ was chosen by model validation}
\label{hbm_pdf}
\end{lfigure} 

\subsection{  Model selection }

Using the Boltzmann machine above, we present a numerical example of the dependency of parameter choice on output and the model selection for entropic herding. In the experiment, $\lm$ for all $m$ was proportional to the scalar parameter $\lambda$. Figure \ref{hbm_engent} shows the moment error and the entropy of the output distribution for different values of $\lambda$ and number of samples. The error is measured by the sum squared error between the feature mean of the output and the target distribution. The results of the identical number of samples are represented by points connected by black broken lines. By comparing these results, we observe that the error in the feature means mostly decreases with increasing $\lambda$. In contrast, we can obtain a more diverse distribution with a large entropy value by decreasing $\lambda$. Therefore, there is a trade-off between accuracy and diversity when choosing parameter values. 

\begin{lfigure}[ htbp ]
\begin{center}
 
\includegraphics[ width=4.5in ]{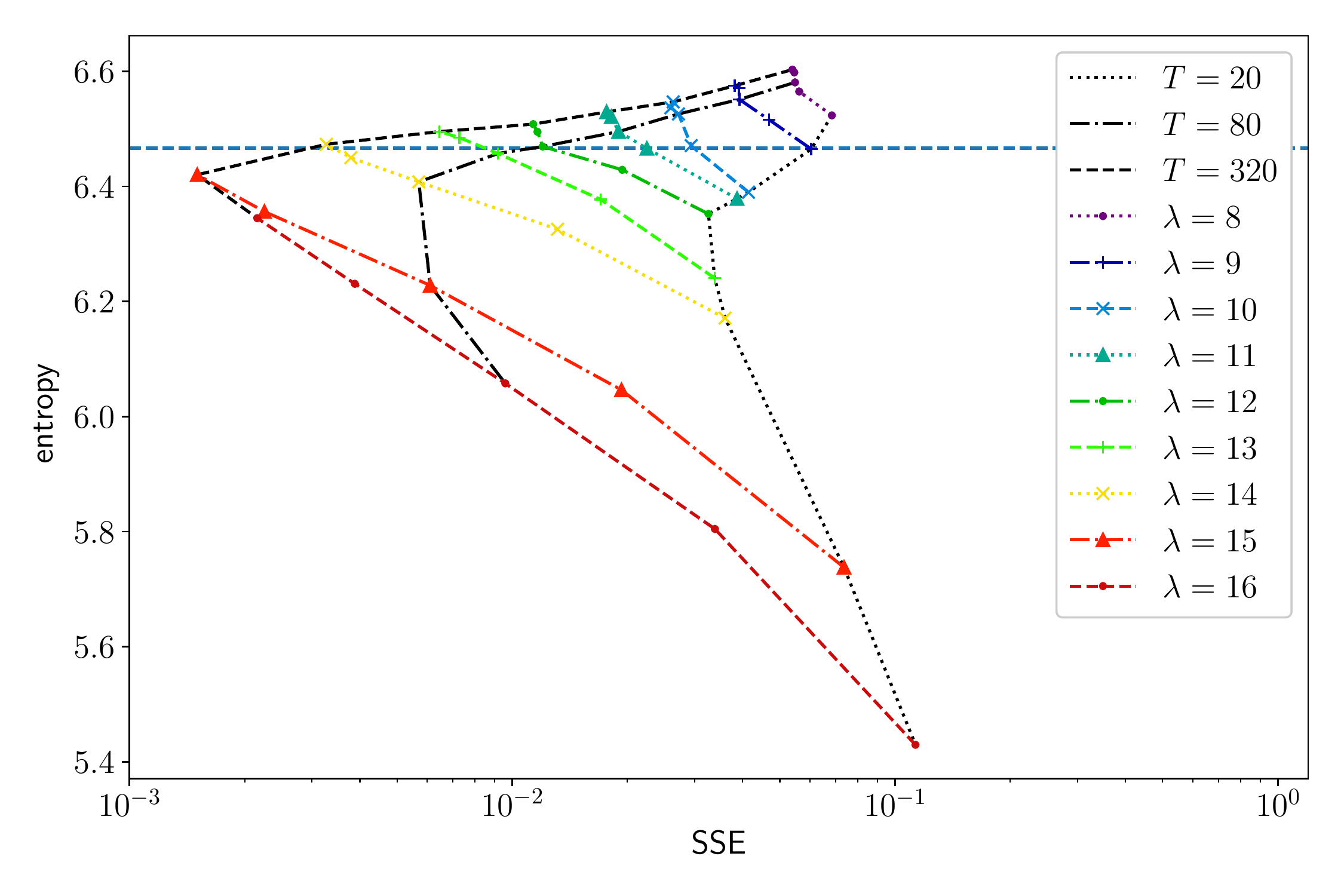} 
\end{center}
\caption{Moment error and the entropy value of the output distribution $\poutput$ for various values of weight parameter $\lambda$ and output length $\toutput\in\{20,40,80,160,320\}$. The mean values of 10 trials are shown. The horizontal axis represents the sum squared error of the feature, which is defined as $SSE=\sum_m(\fm(\poutput)-\mu_m)^2$. It is represented on a logarithmic scale. The vertical axis represents the entropy value, $H(\poutput)$. The horizontal dashed line represents the entropy of target $\pbm$. The results for each $\lambda$ are represented as points with the same color and are connected by lines. The points corresponding to identical $\toutput$ are connected by black lines}
\label{hbm_engent}
\end{lfigure}We can compare the output for various parameters by comparing the KL-divergence between the target distribution $\pbm$ and the output distribution $\poutput$, which is defined as 
\AL{
KL(\pbm\|\poutput)=\sum \pbm(x)\frac{\log \pbm(x)}{\log \poutput(x)},
}
 where the value $\poutput(x)$ can be easily evaluated for each $x$. Note that if we have a validation dataset instead of the target distribution, we can also compare the negative log-likelihood for the validation set. Figure \ref{hbm_kl} shows the KL-divergence for various values of $\lambda$ and the number of samples. We observe that the optimal $\lambda$ is dependent on the number of samples.  

\begin{lfigure}[ htbp ]
\begin{center}
 
\includegraphics[ width=4.5in ]{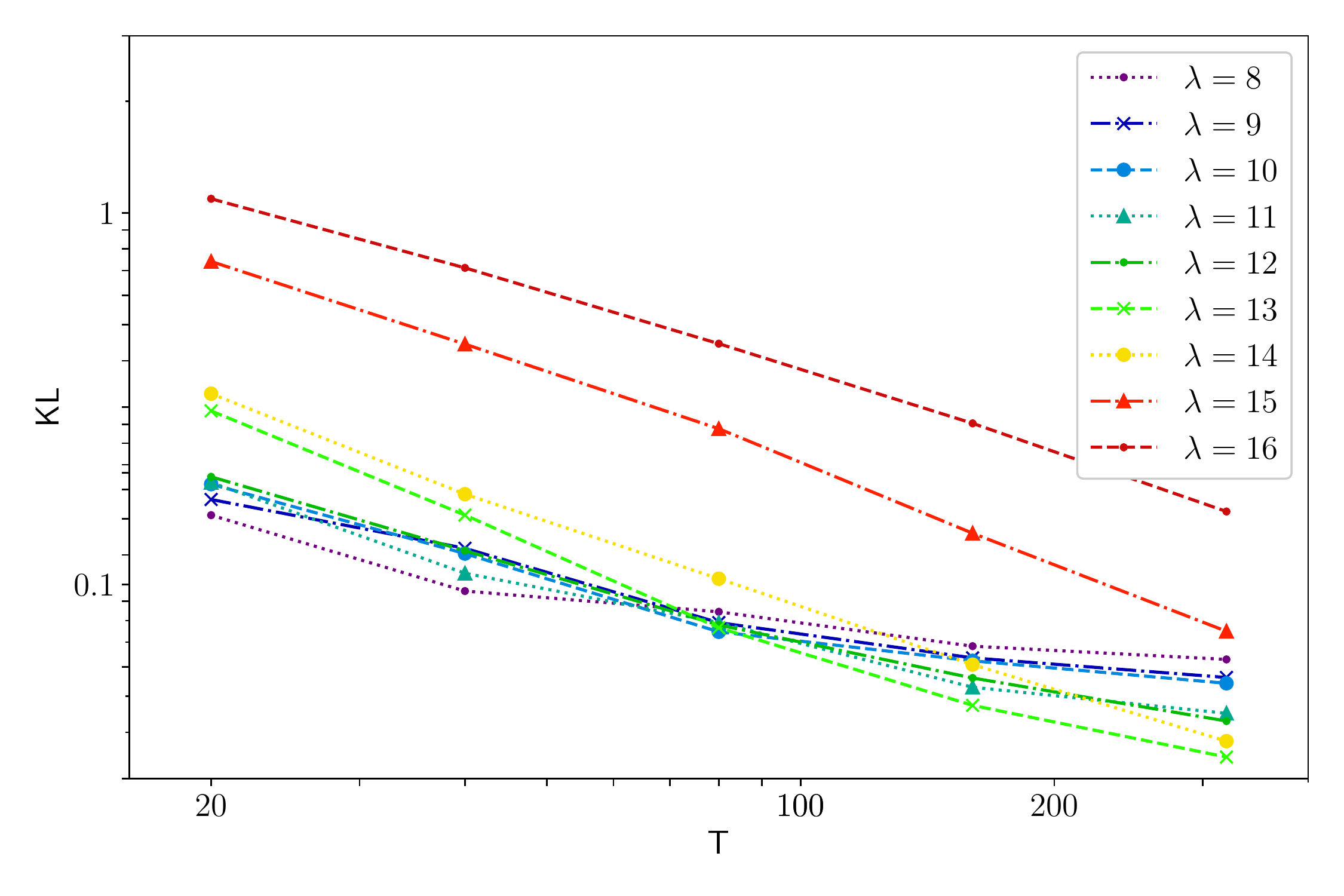} 
\end{center}
\caption{KL-divergence $KL(\pbm\|\poutput)$ for different values of weight parameter $\lambda$ and output length $\toutput\in\{20,40,80,160,320\}$. The mean values of 10 trials are shown. The horizontal and vertical axes represent $\toutput$ and KL-divergence, respectively. The logarithmic scale was used for both axes. The results for each $\lambda$ are represented as points and are connected by lines}
\label{hbm_kl}
\end{lfigure}
\subsection{  UCI wine quality dataset }

Finally, we present an example of an application of entropic herding to real data. We used a wine quality dataset \mycite{WineQuality} from the UCI data repository\footnote{\url{https://archive.ics.uci.edu/ml/datasets/Wine+Quality}, Accessed November 16, 2021}. It was composed of 11 physicochemical features of 4898 wines. The wines are classified into red and white, which have different distributions of feature values. We applied some preprocessing to the data, including log-transformation and z-score standardization. A summary of the features and preprocessing methods is provided in Table \ref{table_wine_features}. 

A simple model for this distribution is a multivariate normal distribution, defined as follows: 
\AL{
p(\vx)=\frac{1}{Z}\exp\AS{-\frac{1}{2}\sum_{i,j}W_{ij}(x_i-\mu_i)(x_j-\mu_j)},
}
 where $\vx=(x_1,\ldots,x_{11})\tr$ is the vector of the feature values, and $Z$ is the normalizing factor. The parameter $W$ in this model can be easily estimated from the covariance matrix of the features. This model is unimodal and has a symmetry such that it is invariant under the transformation $\vx \leftarrow -(\vx-\vec\mu)+\vec\mu$. 

However, as shown in Fig.~\ref{wine_pp_partA} and \ref{wine_pp_partB}, the distribution of this dataset is asymmetric, and bimodal distributions can be observed in the pair plots. To model such a distribution, we improve the model by introducing higher-order terms as follows: 
\AL{
\SP{
p(\vx)&=\frac{1}{Z}\exp\AS{-\frac{1}{2}\sum_{i,j}W_{ij}(x_i-\mu_i)(x_j-\mu_j)
\ASBR{\\&}
\quad - \sum_i \theta^{(3)}_i (x_i-\mu_i)^3 - \sum_i \theta^{(4)}_i (x_i-\mu_i)^4}.
}\LB{eq-wine-target}
}
 The direct parameter estimation for this model is difficult. However, entropic herding can be applied to draw inferences from the feature statistics of the data. We use the following feature set: 
\AL{
\SP{
&\{\phi^{(1)}_i\mid i=1,\ldots,11\} \\
&{} \cup \{\phi^{(2)}_{ij}\mid i,j=1,\ldots,11, i < j\} \\
&{} \cup \{\phi^{(3)}_i\mid i=1,\ldots,11\} \\
&{} \cup \{\phi^{(4)}_i\mid i=1,\ldots,11\},
}
}
 where each feature is defined as 
\AL{
\phi^{(1)}_i(\vx) &= x_i-\mu_i,\\
\phi^{(2)}_{ij}(\vx) &= (x_i-\mu_i)(x_j-\mu_j),\\
\phi^{(3)}_i(\vx) &= (x_i-\mu_i)^3,\\
\phi^{(4)}_i(\vx) &= (x_i-\mu_i)^4.
}
 We added $\phi^{(1)}_i$ to control the mean of each variable. Using the maximum entropy principle with the moment values taken from an assumed background distribution \eq{eq-wine-target}, we reproduce the distribution where the coefficients corresponding to $\phi_i^{(1)}$ are zero. 

We used the candidate distribution set $\qset$, defined as a set of the following distributions: 
\AL{
&x_i \sim \mathcal{N}(\mu_i, \sigma_i^2)\quad^\forall i\in\{1,\ldots,11\},
}
 where $\mu_i\in \Real$ and $\sigma_i > 0.01$ for all $i\in\{1,\ldots,11\}$ are the parameters and $x_1,\ldots, x_{11}$ are independent.

Figure \ref{wine_pp_part} shows the pair plot of the distribution of the dataset and the distribution obtained from entropic herding. We picked three variables in the plot for ease of comparison. The plot for all variables will be presented in Appendix \ref{sec-full-pairplot}. We observe that the distribution obtained well represents the characteristics of the dataset distribution. Particularly, the asymmetry and the two modes are well represented by the output. Figure \ref{wine_circles} illustrates some components in the output distribution $\poutput$. 

\begin{lfigure}[ htbp ]
\begin{center}
 
\begin{minipage}[b ]{\mywidth} {\includegraphics[ width=\textwidth ]{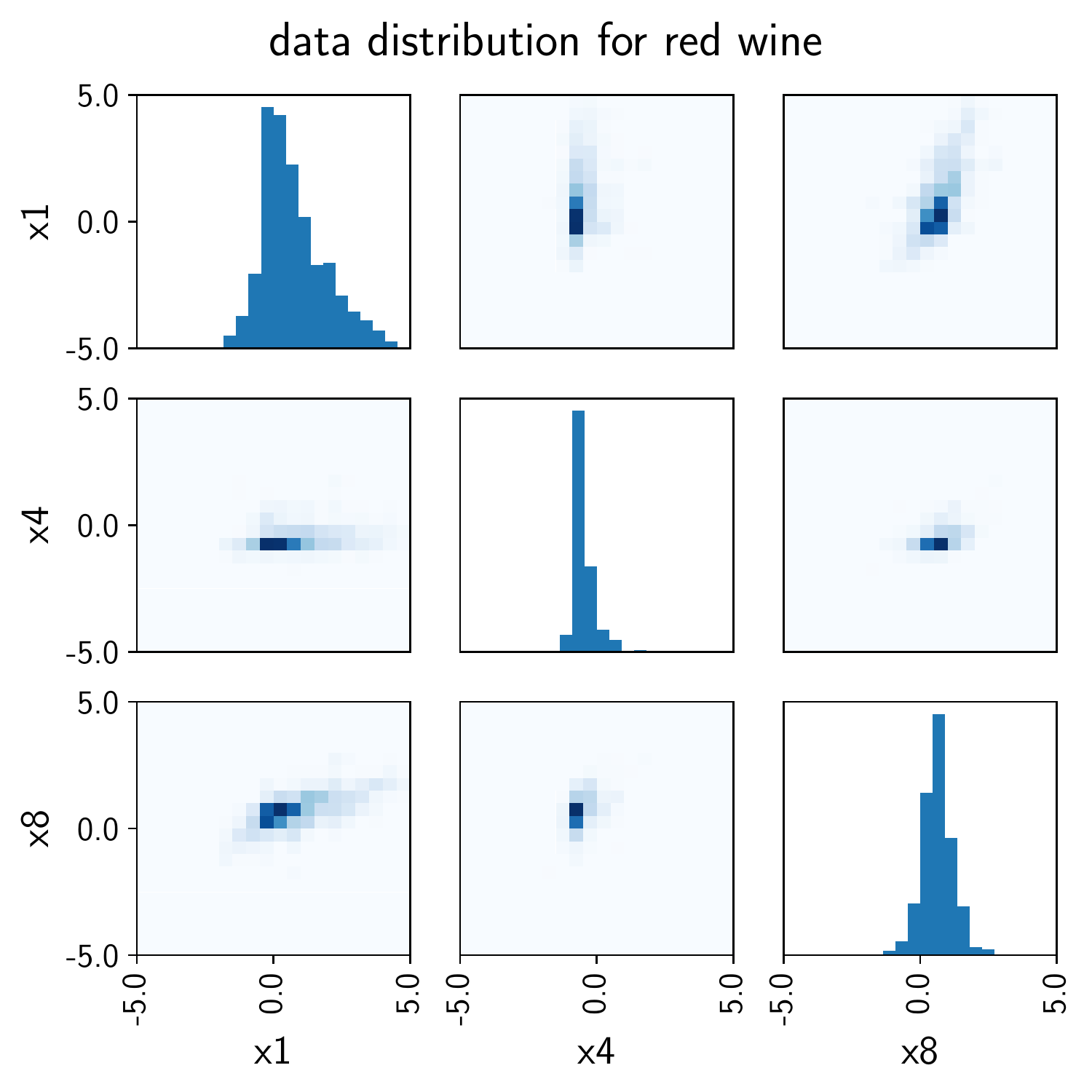} \subcaption{} \label{wine_pp_partA} }\end{minipage} 
\begin{minipage}[b ]{\mywidth} {\includegraphics[ width=\textwidth ]{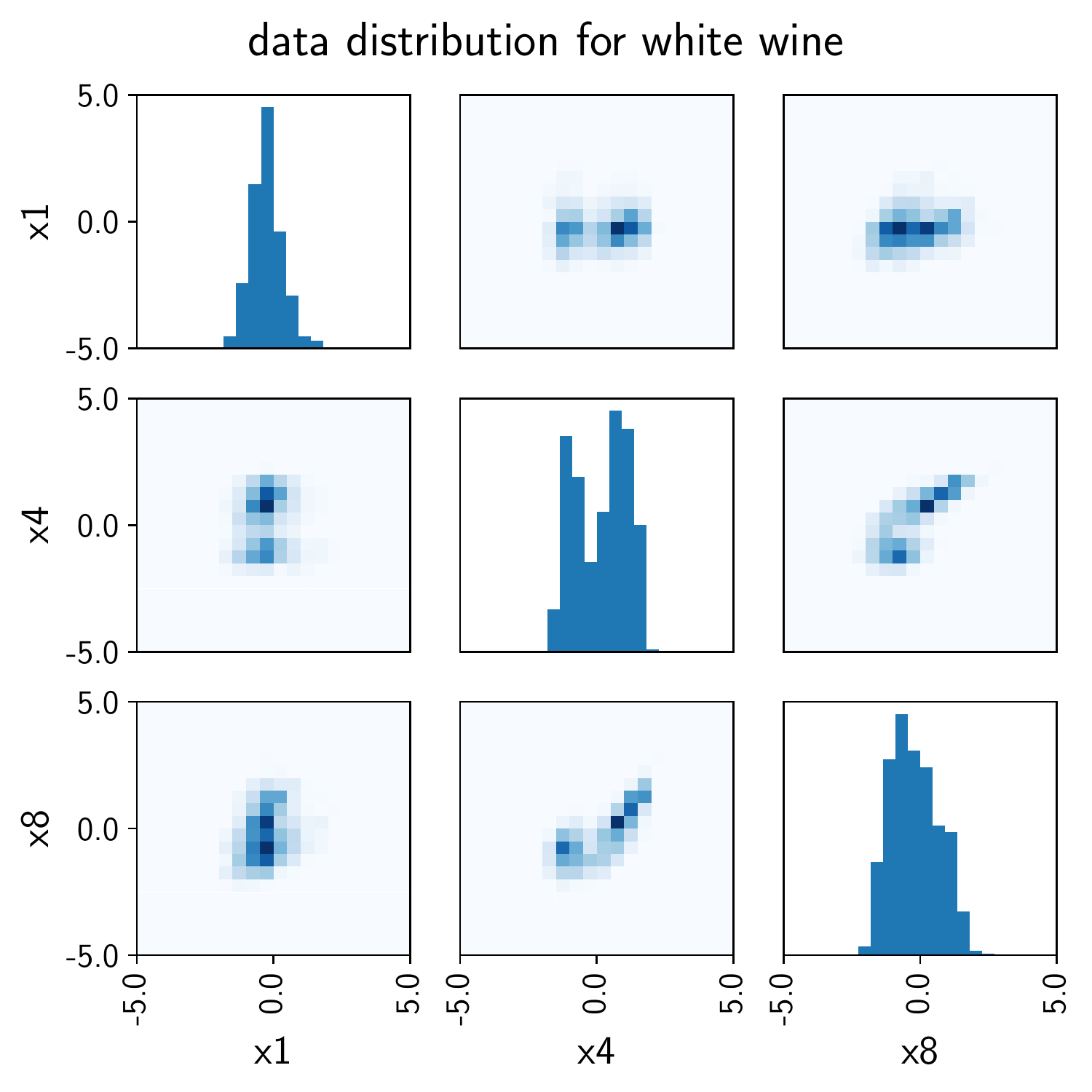} \subcaption{} \label{wine_pp_partB} }\end{minipage} \\ 
\begin{minipage}[b ]{\mywidth} {\includegraphics[ width=\textwidth ]{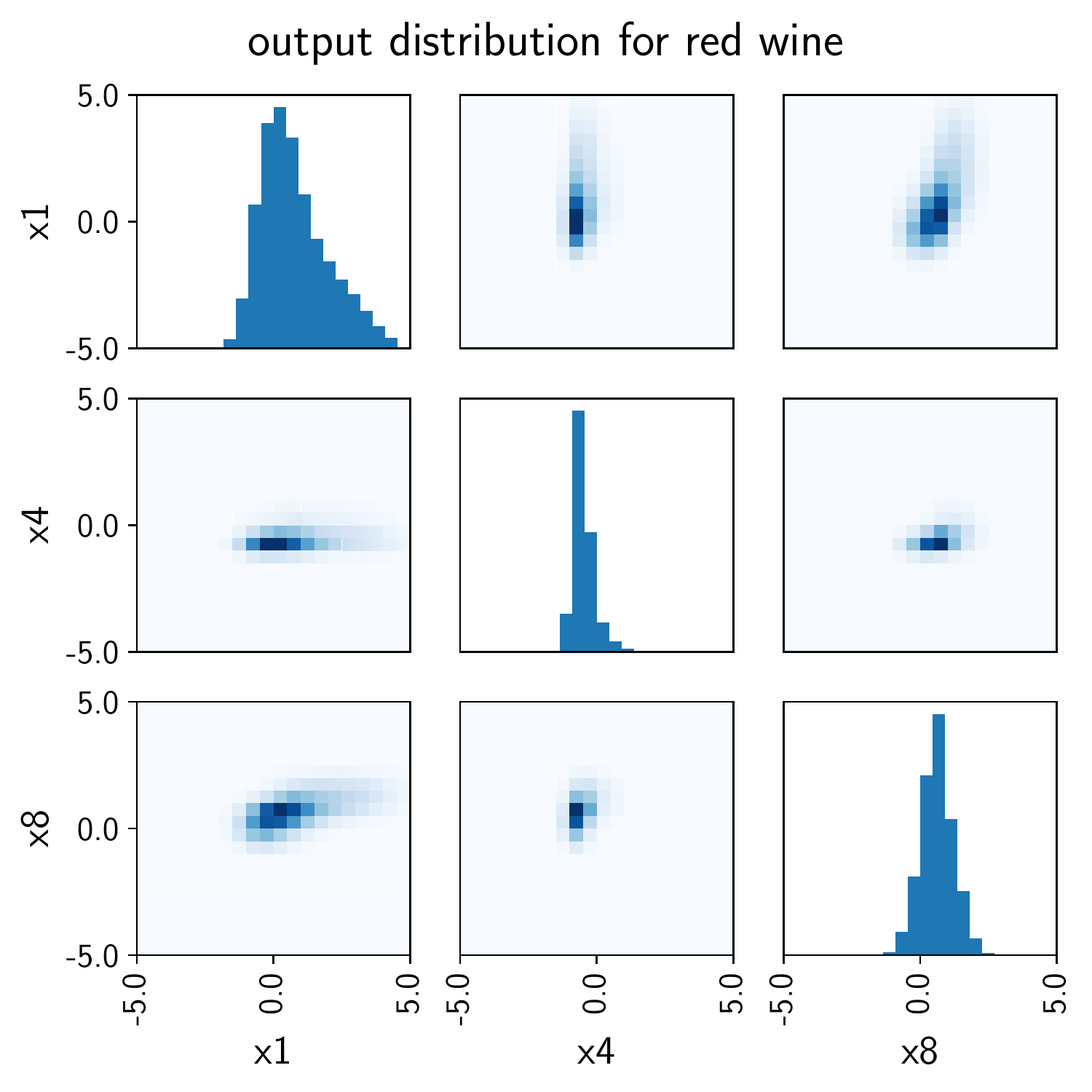} \subcaption{} \label{wine_pp_partC} }\end{minipage} 
\begin{minipage}[b ]{\mywidth} {\includegraphics[ width=\textwidth ]{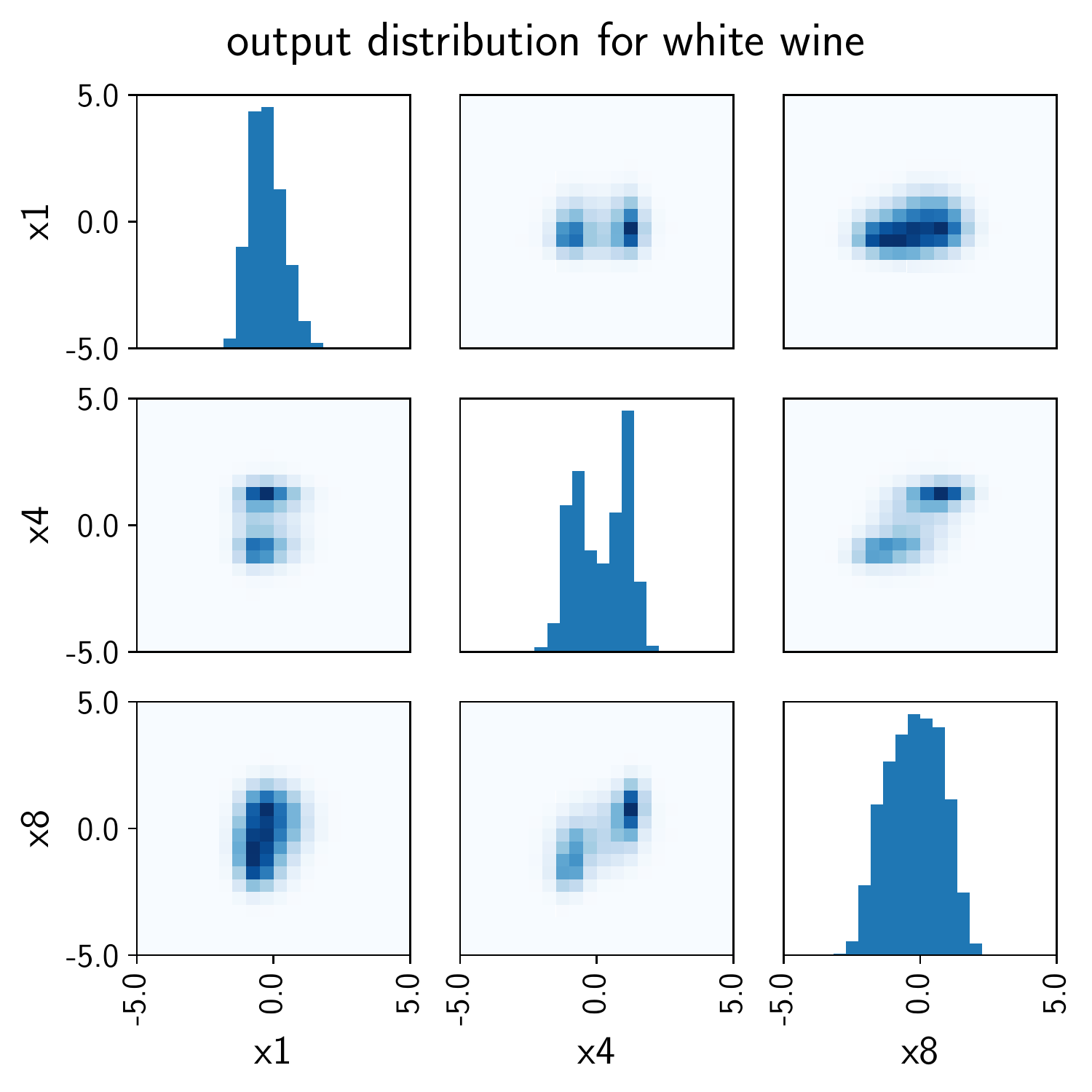} \subcaption{} \label{wine_pp_partD} }\end{minipage} 
\end{center}
\caption{Pair plot of the distribution of the dataset and the distribution obtained from entropic herding. For ease of comparison, we selected three variables $(x_1, x_4, x_8)$, which are shown from top to bottom and from left to right in each panel. The plots for all variables are presented in the Appendix. The plots on the diagonal represent the histograms of the variables. The vertical axis has its own scale of probability mass and does not correspond to the scale shown in the plot. The other plots represent the probability density with cells of width 0.5. The horizontal and vertical axes represent the variables corresponding to rows and columns, respectively. The left (\subref{wine_pp_partA} and \subref{wine_pp_partC}) and right (\subref{wine_pp_partB} and \subref{wine_pp_partD}) panels correspond to the red and white wine, respectively. The top (\subref{wine_pp_partA} and \subref{wine_pp_partB}) and bottom (\subref{wine_pp_partC} and \subref{wine_pp_partD}) panels are for the dataset and entropic herding, respectively}
\label{wine_pp_part}
\end{lfigure} 

\begin{sfigure}[ htbp ]
\begin{center}
 
\includegraphics[ width=\mywidth ]{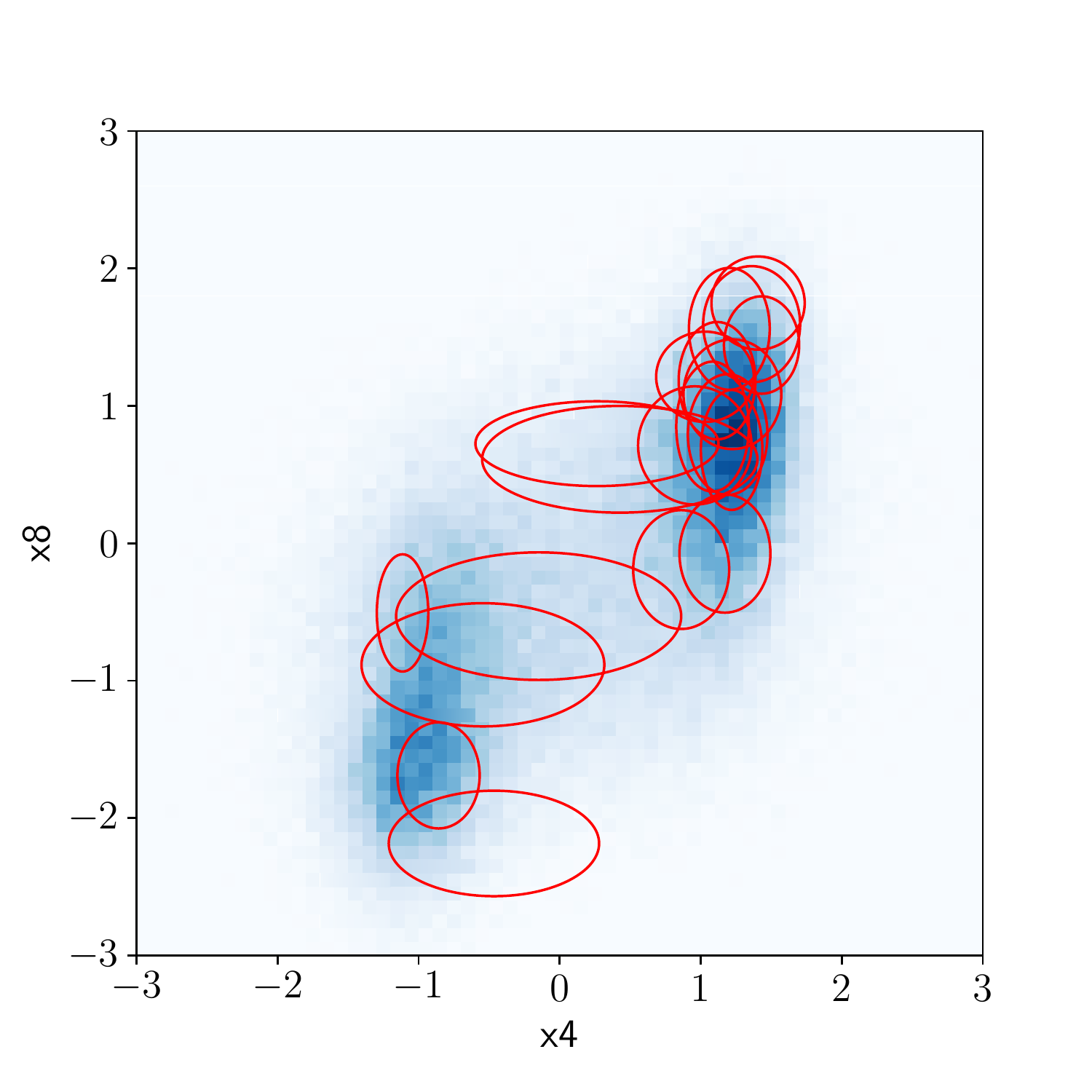} 
\end{center}
\caption{Distribution of $x_4$ (horizontal axis) and $x_8$ (vertical axis) obtained by entropic herding for white wine. The width of each cell in the plot is 0.1. The red circles represent 20 distributions $r\tt$ randomly drawn from the output. The sizes of the circles along the vertical and horizontal axes represent the standard deviations of $x_4$ and $x_8$, respectively}
\label{wine_circles}
\end{sfigure}We can use the {\herding} output as a probabilistic model. Figure \ref{wine_nll} shows the negative log-likelihood of the validation data. We observe that the model corresponding to the true class of wine assigns larger likelihood values than the other models. We used the results for the classification of red and white wine by using the difference in the log-likelihood as a score. The AUC for the validation set was 0.998. The score was close to the AUC value obtained using the log-likelihood of fitted multivariate normal distribution (0.995) and linear logistic regression (0.998).  

\begin{lfigure}[ htbp ]
\begin{center}
 
\includegraphics[ width=4.5in ]{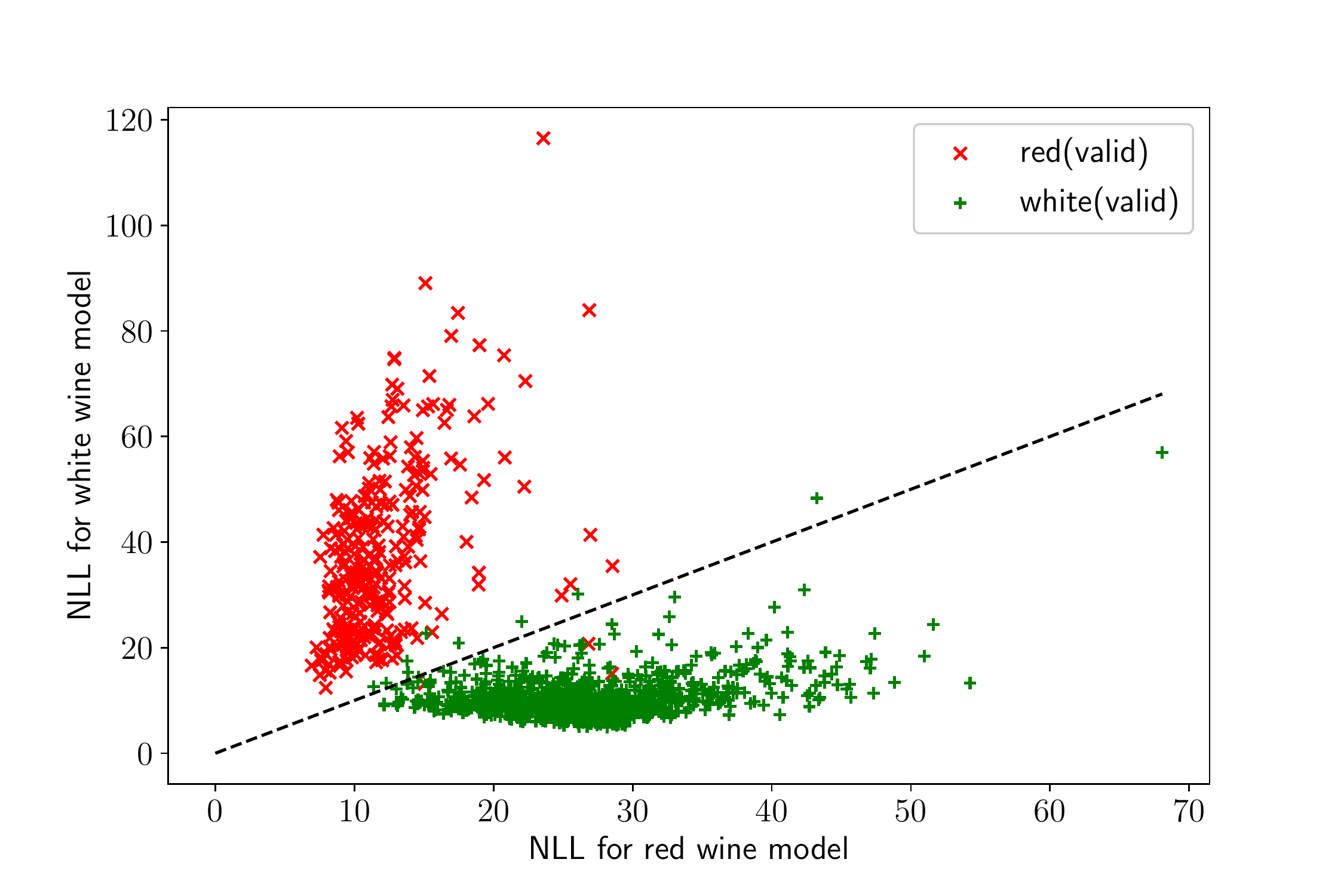} 
\end{center}
\caption{Negative log-likelihood for the validation data. The red crossed marks and green plus signs represent the red and white wines in the validation data, respectively. The horizontal and vertical axes represent the negative log-likelihood $-\log \poutput(\vx)$ for models obtained by entropic herding for red and white wines, respectively. The dashed line represents where the two negative log-likelihoods are identical}
\label{wine_nll}
\end{lfigure}The simple analytic form of the output distribution can also be used for the probabilistic estimation of missing values. We generated a dataset with missing values by dropping $x_4$ from the validation set for white wine. The output distribution is $\poutput(x)=\frac{1}{\tmax}\sum_{T=1}^{\tmax} r\tt(x)$, where $r\tt\in \qset$ is given by the parameter $(\mu_i\tt, \sigma_i\tt)$ for $i=1,\ldots,11$. Let $r_i(x_i;\mu_i\tt,\sigma_i\tt)$ denote the marginal distribution of $x_i$ for $r\tt$, which is a normal distribution. The conditional distribution of $x_4$ on the other variables is expressed as follows: 
\AL{
\SP{
    &\poutput(x_4\mid \{x_i \mid i\neq 4\})\\
    &{} = \frac{1}{Z}\sum_{T=1}^{\tmax}w\tt r_4(x_4;\mu_4\tt,\sigma_4\tt),
}
}
 where $w\tt=\prod_{i\neq 4}r_i(x_i;\mu_i\tt,\sigma_i\tt)$ and $Z=\sum_{T=1}^{\tmax}w\tt$, respectively. Figure~\ref{wine_violinA} shows a violin plot of the conditional distribution for 50 randomly sampled data. Figure~\ref{wine_violinB} shows the results of the multivariate normal distribution. The standard deviations of the estimations shown in Fig.~\ref{wine_violinB} are identical because they are from the same multivariate normal distribution. Comparing these plots, we see that entropic herding is better for the more flexible model than the multivariate normal distribution. The dotted horizontal line shows the true value, and the short horizontal lines show the $[10, 90]$ quantile of the estimated distribution. We counted the number of data with true values in this range. The proportion of such data was 79.7\% for entropic herding and 51.1\% for multivariate normal distribution. We can conclude that estimation by entropic herding was better calibrated. 

\begin{lfigure}[ htbp ]
\begin{center}
 
\begin{minipage}[b ]{6in} {\includegraphics[ width=\textwidth ]{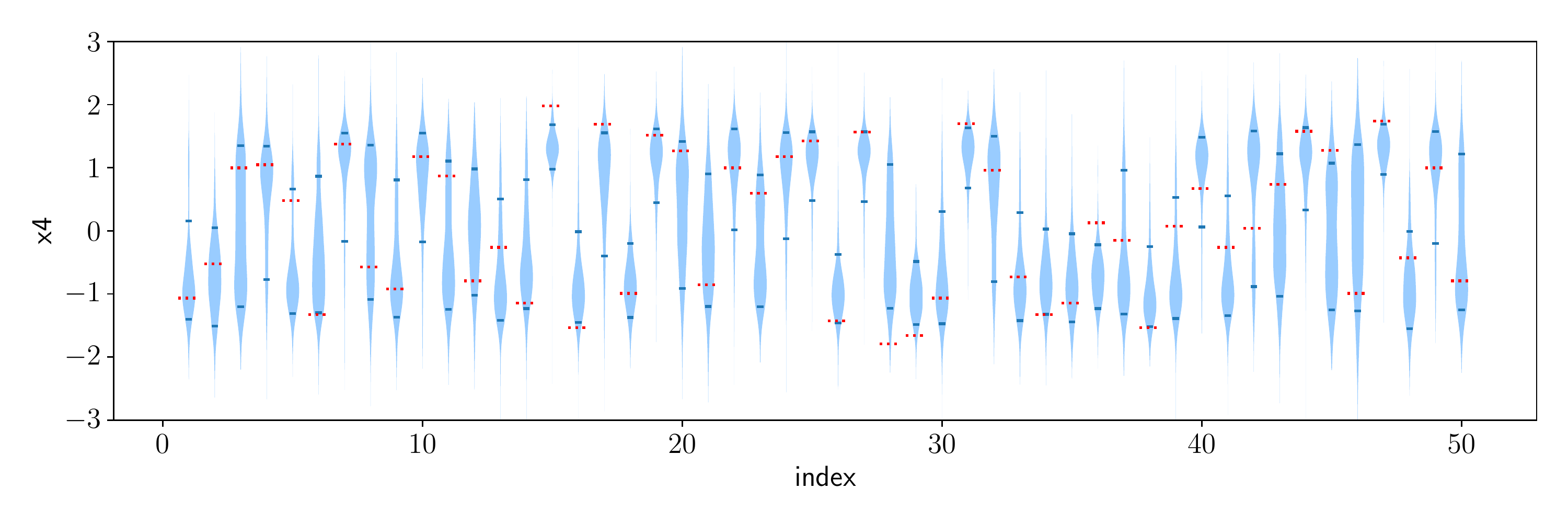} \subcaption{} \label{wine_violinA} }\end{minipage} \\ 
\begin{minipage}[b ]{6in} {\includegraphics[ width=\textwidth ]{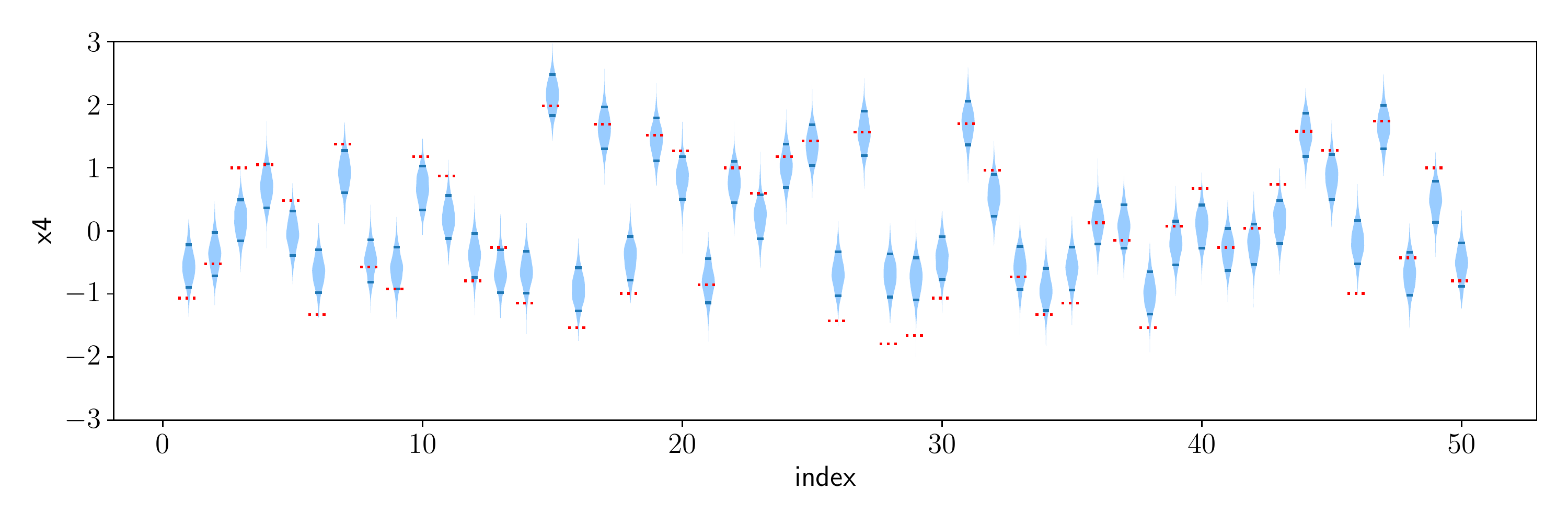} \subcaption{} \label{wine_violinB} }\end{minipage} 
\end{center}
\caption{The violin plots of the conditional distribution of $x_4$ given other variables for 50 randomly sampled validation data. The horizontal axis represents the data index. The vertical axis corresponds to the value of $x_4$. The dotted horizontal line represents the true values in the data. The short horizontal lines show the $[10, 90]$ quantile of the estimated distribution. \subref{wine_violinA} The output distribution of entropic herding. \subref{wine_violinB} The multivariate normal distribution}
\label{wine_violin}
\end{lfigure}

\begin{table*}[htbp]\begin{center}
\caption{Summary of the UCI wine quality dataset and the preprocessing applied. The columns ``variable'' and ``name'' show the correspondence between the indices and names of the features. The column ``range'' shows the minimum and maximum values in the dataset prior to preprocessing. The rightmost two columns represent the applied preprocessing. The checkmark (\cmark) indicates that preprocessing in the column is applied. ``Log-transformation'' means the transformation $x\leftarrow \log_{10} x$. The checkmark with an asterisk (\cmark*) indicates that there exist data such that $\log_{10}x\to -\infty$. We assigned $-5.0$ instead to these cases. Linear transformation is applied to each variable to make the average zero and the variance one, as represented in the column ``z-score''}
\begin{tabular}{rlrcc}
\multicolumn{1}{l}{variable}&name&\multicolumn{1}{l}{range}&\multicolumn{1}{l}{log-transformation}&\multicolumn{1}{l}{z-score}\\\hline
$x_1$&fixed acidity&$[ 3.80 , 15.90 ]$&\xmark&\cmark\\
$x_2$&volatile acidity&$[ 0.08 , 1.58 ]$&\xmark&\cmark\\
$x_3$&citric acid&$[ 0.00 , 1.66 ]$&\cmark*&\cmark\\
$x_4$&residual sugar&$[ 0.60 , 65.80 ]$&\cmark&\cmark\\
$x_5$&chlorides&$[ 0.01 , 0.61 ]$&\cmark&\cmark\\
$x_6$&free sulfur dioxide&$[ 1.00 , 289.00 ]$&\cmark&\cmark\\
$x_7$&total sulfur dioxide&$[ 6.00 , 440.00 ]$&\cmark&\cmark\\
$x_8$&density&$[ 0.99 , 1.04 ]$&\xmark&\cmark\\
$x_9$&pH&$[ 2.72 , 4.01 ]$&\xmark&\cmark\\
$x_{10}$&sulphates&$[ 0.22 , 2.00 ]$&\xmark&\cmark\\
$x_{11}$&alcohol&$[ 8.00 , 14.90 ]$&\xmark&\cmark\\
\end{tabular}
\label{table_wine_features}
\end{center}\end{table*}

\section{ Discussion  }
\label{sec-discussion}
The most significant difference between proposed entropic herding and original point herding is that entropic herding represents the output distribution as a mixture of probability distributions. As for the applications of entropic herding, some of the desirable properties of density calculation and sampling, discussed in Section \ref{sec-adv}, are the results of using the distribution mixture for the output. There are also many probabilistic modeling methods that use the distribution mixture, such as the Gaussian mixture model and kernel density estimation \mycite{ParzenKDE}. All of these methods, including entropic herding, share the above characteristics.

The most important difference between entropic herding and other methods is that it does not require specific data points and only uses the aggregated moment information of the features. The use of aggregated information has recently attracted increasing attention \mycite{SheldonNIPS11,LawNIPS18,TanakaAAAI19,TanakaNIPS19,ZhangNIPS20}. Sometimes, we can only use the aggregated information for privacy reasons. For example, statistics, such as population density or traffic volumes, are often aggregated to mean values by spatial regions, which often have various granularities \mycite{LawNIPS18,TanakaAAAI19,TanakaNIPS19}. In addition to data availability, features can be selected to avoid irrelevant information depending on the focus of the study and data quality. These advantages are common to entropic and point herding methods, but nonetheless distinctive when compared with other probabilistic modeling methods. Notably, kernel herding \mycite{KernelHerding} is a prominent variant of herding that has a convergence guarantee, but it does not share the aforementioned advantages because it requires individual data points to use the features defined in the reproducing kernel Hilbert space.

The framework of entropic herding does not depend on the choice of the feature functions $\pm$ and candidate distribution $\qset$. In practice, the requirement for $\qset$ is the availability of the calculation and optimization of the expectation $\expe{q}{\pm(x)}$ and the entropy $H(q)$ for $q\in\qset$. In the numerical examples, we used the distribution of independent random variables for $\qset$ for simplicity. In recent years, many methods have been developed for the generative expression of a probability distribution, for example, using neural networks \mycite{GAN14,VAE14} and decision trees \mycite{DensityForest}. We expect that this study will serve as a theoretical framework for the more advanced use of these sophisticated generative models which can use them as a mixture.

Regarding computational efficiency, the most computationally intensive part of entropic herding at present is the optimization step. We must repeatedly solve the optimization problems generated from the weight dynamics. If the amount of weight update in each step is small, we can assume that the problems in the consecutive steps are similar. As described in the Appendix, we used gradient descent from the latest solution for the optimization of the experiment and demonstrated its feasibility. However, exploiting the characteristics of repetitive optimization will produce further improvement.

The gradient descent in entropic herding is easily realizable if the analytic form of entropy and the expectation of the feature over the candidate distribution $q\in \qset$ are available. However, if the analytic form is not available, we must resort to more sophisticated optimization methods. The problem solved in the optimization step is equivalent to obtaining the distribution approximation by minimizing the KL-divergence (see Eq.~(\ref{eq-opt-kl})). This problem often appears in the field of machine learning, such as in variational Bayes inference. To extend the application of {\herding}, it can be combined with recent optimization techniques developed in this field. 

\section{ Conclusion  }
\label{sec-conclusion}
In this paper, we proposed an algorithm called entropic herding as an extension of {\herding}. 

By using the proposed algorithm as a framework, we discussed the connection between {\herding} and the maximum entropy principle. Specifically, entropic herding is based on the minimization of the target function $\losa$. This function, which is composed of the feature moment error and entropy term, represents the maximum entropy principle. Herding minimizes this function in two ways. The first is the minimization of the upper bound of the target function by solving the optimization problem in each step. The second is the diversification of the samples by the complex dynamics of the high-dimensional weight vector. We also studied the output of entropic herding through a mathematical analysis of the optimal distribution of this minimization problem.    

We also clarified the difference between entropic and point herding for application both theoretically and numerically. We demonstrated that point herding can be extended by explicitly considering entropy maximization in the optimization step by using distributions rather than points for the candidates. The output sequence of the entropic herding has more efficiency in the number of samples than the point herding because each output is a distribution that can assign a probability mass to many states. The output sequence of the distribution can be used as a mixture distribution that allows independent sample generation. The mixture distribution also has an analytic form. Therefore, model validation using likelihood and inference through conditional distribution is also possible. 

As discussed in Section \ref{sec-discussion}, entropic herding allows flexibility in the choice of the feature set and candidate distribution. We expect entropic herding to be used as a framework for developing effective algorithms based on the distribution mixture.

\bmhead{Acknowledgments}
This work is partially supported by JST CREST (JP-MJCR18K2), by AMED (JP21dm0307009), by UTokyo Center for Integrative Science of Human Behavior (CiSHuB), by the International Research Center for Neurointelligence (WPI-IRCN) at The University of Tokyo Institutes for Advanced Study (UTIAS), and by JST Moonshot R\&D Grant Number JPMJMS2021.


\begin{appendices}
 \appendix 

\section{ Details of optimization algorithms and experiments  }
\label{sec-implementation}
Here, we present detailed descriptions of the methods used for the experiments in this study.

The preprocessing applied to the feature functions is detailed below. Let $\phi_m$ be the $m$th feature function provided to the algorithm. Let $x_1,\ldots, x_D$ be the input data vector. We calculate the mean $\mu_m$ and standard deviation $\sigma_m$ of the feature values, namely, $\phi_m(x_1),\ldots,\phi_m(x_D)$. We then standardize the feature values as follows: 
\AL{
\phi'_m(x) = \frac{1}{\sigma_m}(\phi_m(x)-\mu_m).
}
 The feature value for the distribution is defined as $\fm'(p)\equiv\expe{p}{\phi'_m(x)}$. After standardization, the average of the feature values used for the target value becomes zero, and we use the same $\Lambda_m$ for all $m$. Namely, we used  
\AL{
\am\tt=\lambda\fm'(p\tt)
}
 instead of using \eq{eq-am}. Note that this is equivalent to $\am\tt=\frac{\lambda}{\sigma_m}(\fm(p\tt)-\mm)$, which can also be obtained by substituting $\Lambda_m=\lambda/\sigma_m$ into \eq{eq-am}. Here, the parameter tuning of $\Lambda_m$ is simplified to optimize the single global parameter $\lambda$. Note that the preprocessing simplifies parameter tuning and is not generally necessary. When we do not have information on the standard deviation, we can still use entropic herding by tuning each $\lm$. 

The optimization problem in Eq.~(\ref{eq-opt-entropic}) was solved using the gradient method. A different parameterized candidate distribution set $\qset$ is used for each case, and the parameter is optimized by repeating a small update following the gradient of the target function. The optimization for each $T$ is performed by iterating the number $\kupdate$ of optimization steps. The obtained state $r\tt$ is used as the initial state for the next optimization at $T+1$. The amount of update in each optimization step was modified using the Adam method \mycite{Adam} to maintain the numerical stability of the procedure. The hyperparameters in Adam were set to $(\beta_1, \beta_2, \eps) = (0.8, 0.99, 10^{-8})$ (see \mycite{Adam}). The gradient after the modification was multiplied by the learning rate, denoted by $\etalearn$. At the beginning of the inner loop of optimization, for each $T$, the rolling means maintained by Adam were reset to the initial value of zero. 

In some cases, we used modified weight values depending on the optimization state. Instead of $\am\tp$, we used the weight values $\am'$ defined as follows: 
\AL{
\am'&\equiv\am\tp+\ups\tt\lm(\fm(\qcur)-\fm(p\tp))\LN{eq-am2}
\SP{
&=\am\tp\\
&\qquad+\ups\tt\AS{\lm(\fm(\qcur)-\mm)-\am\tp},
}
}
 where $\qcur$ denotes the current state in the inner loop of the optimization. Note that this is different from $r\tp$, except for the beginning of the inner loop. This modification was also used for numerical stability. The justification is as follows: Eq.~(\ref{eq-tloa-decomp1}) is equivalent to 
\AL{
\SP{
    \tloa\tt&=\Biggl(\summ\frac{1}{2\lm}\biggl(\am'\biggr.\Biggr.\\
    &\Biggl.\biggl.\qquad{}+\ups\tt\lm(\fm(r\tt)-\fm(\qcur))\biggr)^2\Biggr)
}\NN
&\quad- \AS{\ten\tp+\ups\tt(\ent(r\tt)-\ten\tp)}.
}
 The terms dependent on $r\tt$ are  
\AL{
    &\ups\tt\AS{\AS{\summ\am'\fm(r\tt)}-\ent(r\tt)}\NN
    &{}+O((\ups\tt)^2).
}
 By neglecting small higher order term $O((\ups\tt)^2)$, the minimization is reduced to solving $r\tt=\argmin_{q\in\qset}\AS{\AS{\summ \am'\fm(q)}-H(q)}$, which is equivalent to Eq.~(\ref{eq-opt-entropic}) with the substitution of $\am$ by $\am'$.

We introduced a stochastic jump in the optimization step in some cases. In this case, a jump is proposed with a probability of $\pjump$ in each optimization step. The candidate distribution $q'\in \qset$ is drawn randomly and accepted as the next state if it has a better target function value than the distribution of the current state. The method of candidate generation is described for each problem in the following sections. 

The amount of the update in each step, denoted by $\ups\tt$, is set to the same value. Namely, we set $\ups\tt=\epsherding$ for each $T$. This means that the distribution weights $\rst$ in Section \ref{sec-greedy} geometrically decay at the rate of $\rho=1-\epsherding$.

To eliminate nonstationary behavior depending on the initial condition, we set a burn-in period in the algorithm similarly to conventional MCMC algorithms. After the {\herding} run with $\tmax=\tburnin+\toutput$, the output sequence, except for the burn-in period, is aggregated into an output mixture distribution. The output mixture is obtained as follows: 
\AL{
\poutput(x)&=\frac{1}{\toutput}\sum_{t=\tburnin+1}^{\tburnin+\toutput}r^{(t)}(x).
}

We implemented the method above using the automatic differentiation provided by the Theano \mycite{theano} framework. The default settings for the above parameters are summarized in Table \ref{table_params_in_algo}. The values in the table were used unless explicitly stated otherwise.

\begin{table*}[htbp]\begin{center}
\caption{Default settings for each experiment in this paper. A description of these parameters is in the text. The check mark ($\cmark$) in column $\am'$ indicates that the modified weight value (Eq.~(\ref{eq-am2})) is used.}
\begin{tabular}{lrrrrrcr}
&$\epsherding$&$\toutput$&$\tburnin$&$\etalearn$&$\kupdate$&$\am'$&$\pjump$\\\hline
bimodal distribution&0.02&100&50&0.2&50&\cmark&0\\
bimodal distribution (point herding)&0.002&1000&500&0.2&50&\cmark&0\\
Boltzmann machine&0.05&320&100&0.2&50&-&0.1\\
UCI wine quality data&0.01&500&100&0.2&20&\cmark&0.1\\
\end{tabular}
\label{table_params_in_algo}
\end{center}\end{table*}

\subsection{  One-dimensional bimodal distribution }

Using the Metropolis--Hastings method, 10000 samples were generated and used as the input. We used the candidate distribution set $\qset$ defined as 
\AL{
\qset=\{\mathcal{N}(\mu,\sigma^2)\mid \mu\in\Real, \sigma > 0.01\}.
}

The four feature means over the candidate distribution, denoted by $\et_i(q)=\expe{q}{\phi_i(x)}=\expe{q}{x^i}$, are obtained as follows: 
\AL{
\et_1(q) =\expe{q}{x}&=\mu,\\
\et_2(q) =\expe{q}{x^2}&=\mu^2+\sigma^2,\\
\et_3(q) =\expe{q}{x^3}&=\mu^3+3\mu\sigma^2,\\
\et_4(q) =\expe{q}{x^4}&=\mu^4+6\mu^2\sigma^2+3\sigma^4.
}
 As they all have analytic expressions, the gradients with respect to the parameters can be easily obtained.

We applied variable transformation $l=\log \sigma$ and optimized the parameter set $(\mu,l)\in \Real\times[\log 0.01,+\infty)$ in the algorithm. We reported the results for $\lambda=100$ in this study.

For the case of entropic herding with stochastic update, $\pjump=0.1$ is used. When a random jump is proposed, the candidate distribution is determined by using the current value of $l$ and drawing $\mu$ randomly from $[\mu_{min}, \mu_{max}]$, where $\mu_{min}$ and $\mu_{max}$ are the minimum and maximum values of $\mu$ that have so far appeared in the procedure. 

Point herding with Metropolis updates was implemented for comparison. In this case, a random jump is proposed in each step ($\pjump=1$). The candidate is accepted according to the Metropolis rule. That is, it is accepted with a probability $\min(1, \exp(-\Delta F))$, where $\Delta F$ is the increase in the target function $\loss_{\va\tp}(x)$. In this case, the modified weight values $\am'$ are not used. 

\subsection{  Boltzmann machine }

The parameters $W_{ij}$ ($i<j$) for $\pbm$ are drawn from a normal distribution with mean zero and variance $(0.2)^2/N$. We then added some structural interactions to increase nontrivial correlations; we assigned $W_{45}=0$ and $W_{i(i+1)}=-0.3$ for $i\neq 4$. We obtained the values of $\mu_m$ and $\sigma_m$ used for feature standardization by calculating the expectation following the definition of the target model $\pbm$.

We used the candidate distribution set $\qset$, defined as consisting of the following set of random variables: 
\AL{
\SP{
&x_i = \left\{\begin{array}{l}
+1 \quad (\text{with probability } p_i)\\
-1 \quad (\text{with probability } 1-p_i)\\
\end{array}\right.\\
&\quad^\forall i\in\{1,\ldots, N\},
}
}
 where $p_i\in [0, 1]$ are the parameters and $x_1,\ldots, x_N$ are independent. 

The feature means over the candidate distribution have an analytic expression such that the gradient can be obtained easily: 
\AL{
\SP{
\et_{ij}(q)&=\expe{q}{\phi_{ij}(\vx)}=\expe{q}{x_ix_j}\\
&=(2p_i-1)(2p_j-1).
}
}

We applied the variable transformation $s_i=\log\frac{p_i}{1-p_i}$ and optimized the parameter set $(s_1,\ldots,s_N)\in \Real^N$ in the algorithm. This variable transformation states the relationship between the gradients as $\pp{s_i}f=\frac{dp_i}{ds_i}\pp{p_i}f$. However, the coefficient $\frac{dp_i}{ds_i}=p_i(1-p_i)$ can be very small if $\lvert s_i \rvert$ is large. We dropped this coefficient, which did not affect the sign of the update, and used $\pp{p_i}f$ as the gradient. 

The candidate distribution for the random jump is obtained by setting the sign of each $s_i$ randomly, keeping the absolute value $\lvert s_i \rvert$.  

\subsection{  UCI wine quality data }

The input data were split into training and validation sets. Twenty percent (20\%) of the data for red and white wine each, were randomly selected as the validation set. The remaining training set was used to obtain the feature statistics needed by the algorithm.  After model validation, we reported the results for $\lambda=200$ in this study.

The parameterization of the candidate distributions $\qset$ is similar to the case of the bimodal distribution above. We used the candidate distribution set $\qset$, defined as consisting of the set of the following random variables: 
\AL{
&x_i \sim \mathcal{N}(\mu_i, \sigma_i^2)\quad^\forall i\in\{1,\ldots,11\},
}
 where $\mu_i\in \Real$ and $\sigma_i > 0.01$ for all $i\in\{1,\ldots,11\}$ are the parameters and $x_1,\ldots, x_{11}$ are independent. 

The means of the features $\phi_i^{(1)}$, $\phi_i^{(3)}$, and $\phi_i^{(4)}$ over the candidate distribution can be obtained in the same way as in the case of the bimodal distribution above. In addition, the mean of $\phi^{(2)}_{ij}$ is calculated as follows: 
\AL{
\SP{
\et^{(2)}_{ij}(q)&=\expe{q}{\phi_{ij}^{(2)}(\vx)}=\expe{q}{x_i^2}\\
&=\mu_i^2+\sigma_i^2
}& (i=j),\\
\SP{
\et^{(2)}_{ij}(q)&=\expe{q}{\phi_{ij}^{(2)}(\vx)}=\expe{q}{x_ix_j}\\
&=\mu_i\mu_j
}& (i\neq j).
}
 They also have analytic expressions, allowing the gradients to be obtained easily.

The variable transformation $l_i=\log\sigma_i$ was applied to the optimizaiton algorithm. The candidate distribution for the random jump is taken for each $(\mu_i,l_i)$ as in the case of the bimodal distribution above. 

The plots of the distributions in Figs.~\ref{wine_pp_part}, \ref{wine_circles} and \ref{wine_violin} are made from sufficient number of random samples drawn from the output distribution $\poutput$. 

\section{ Full plot of the UCI wine quality data  }
\label{sec-full-pairplot}
The pair plot for all variables of the UCI wine quality data is displayed in Figs. \ref{wine_pp_red} and \ref{wine_pp_white}, respectively. 

\begin{lfigure}[ htbp ]
\begin{center}
 
\begin{minipage}[b ]{0.6\textwidth} {\includegraphics[ width=\textwidth ]{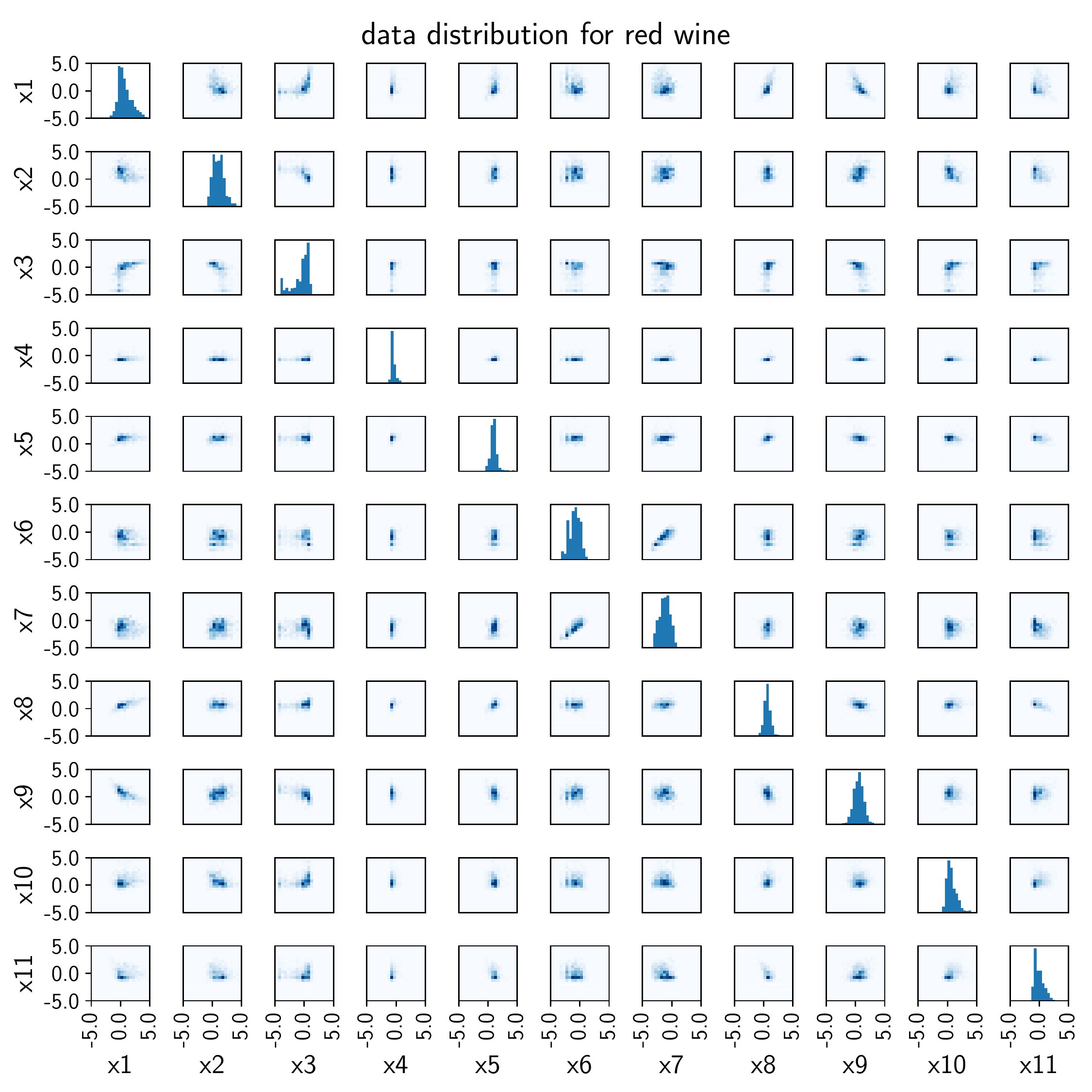} \subcaption{} \label{wine_pp_data_red} }\end{minipage} \\ 
\begin{minipage}[b ]{0.6\textwidth} {\includegraphics[ width=\textwidth ]{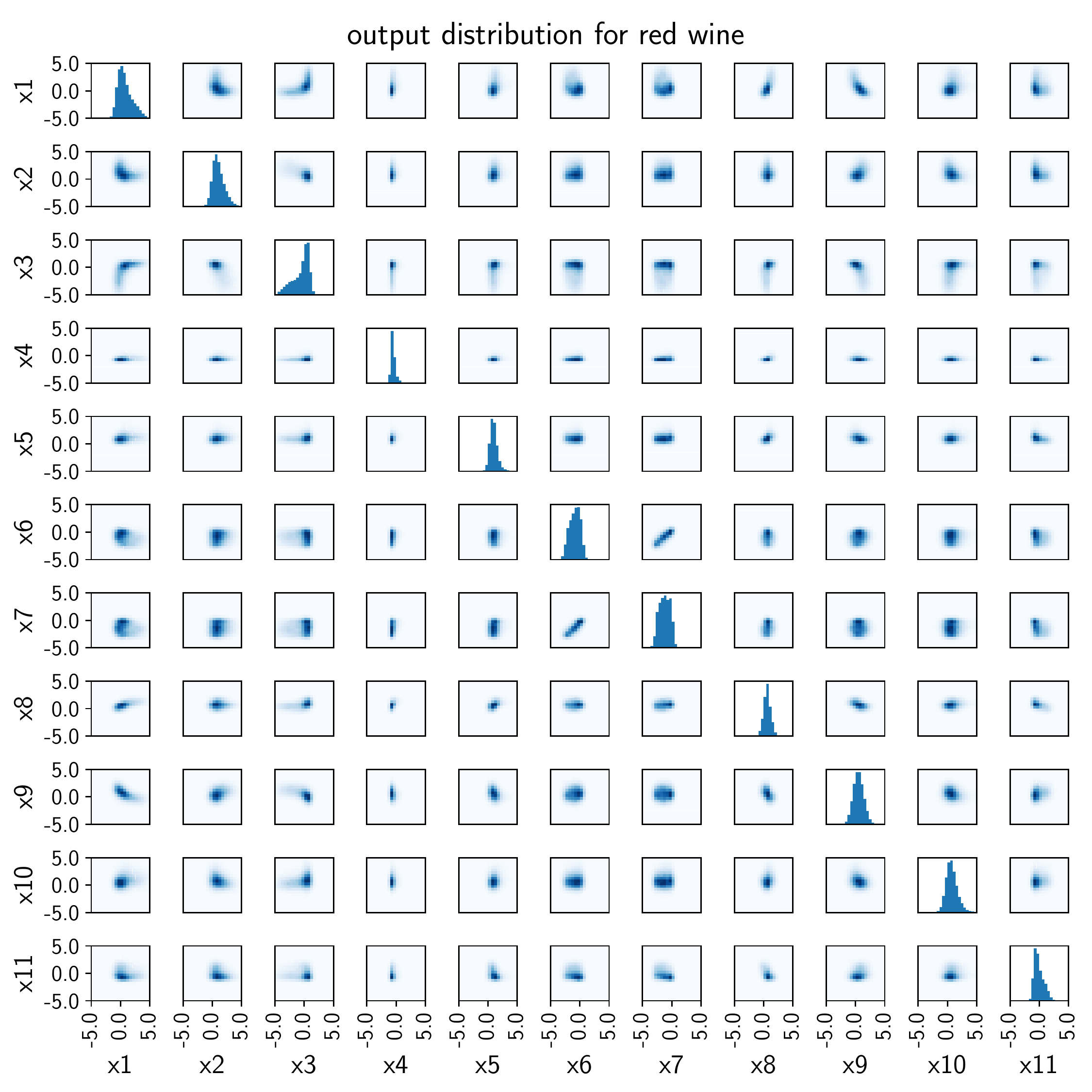} \subcaption{} \label{wine_pp_herding_red} }\end{minipage} 
\end{center}
\caption{For the red wine, the pair plot of the distributions of the dataset and distribution obtained from the entropic herding is shown. \subref{wine_pp_data_red} The dataset. \subref{wine_pp_herding_red} The distribution obtained by entropic herding. The meaning of the plot is the same as that of Fig.~\ref{wine_pp_part}, but all variables in the model are displayed}
\label{wine_pp_red}
\end{lfigure} 

\begin{lfigure}[ htbp ]
\begin{center}
 
\begin{minipage}[b ]{0.6\textwidth} {\includegraphics[ width=\textwidth ]{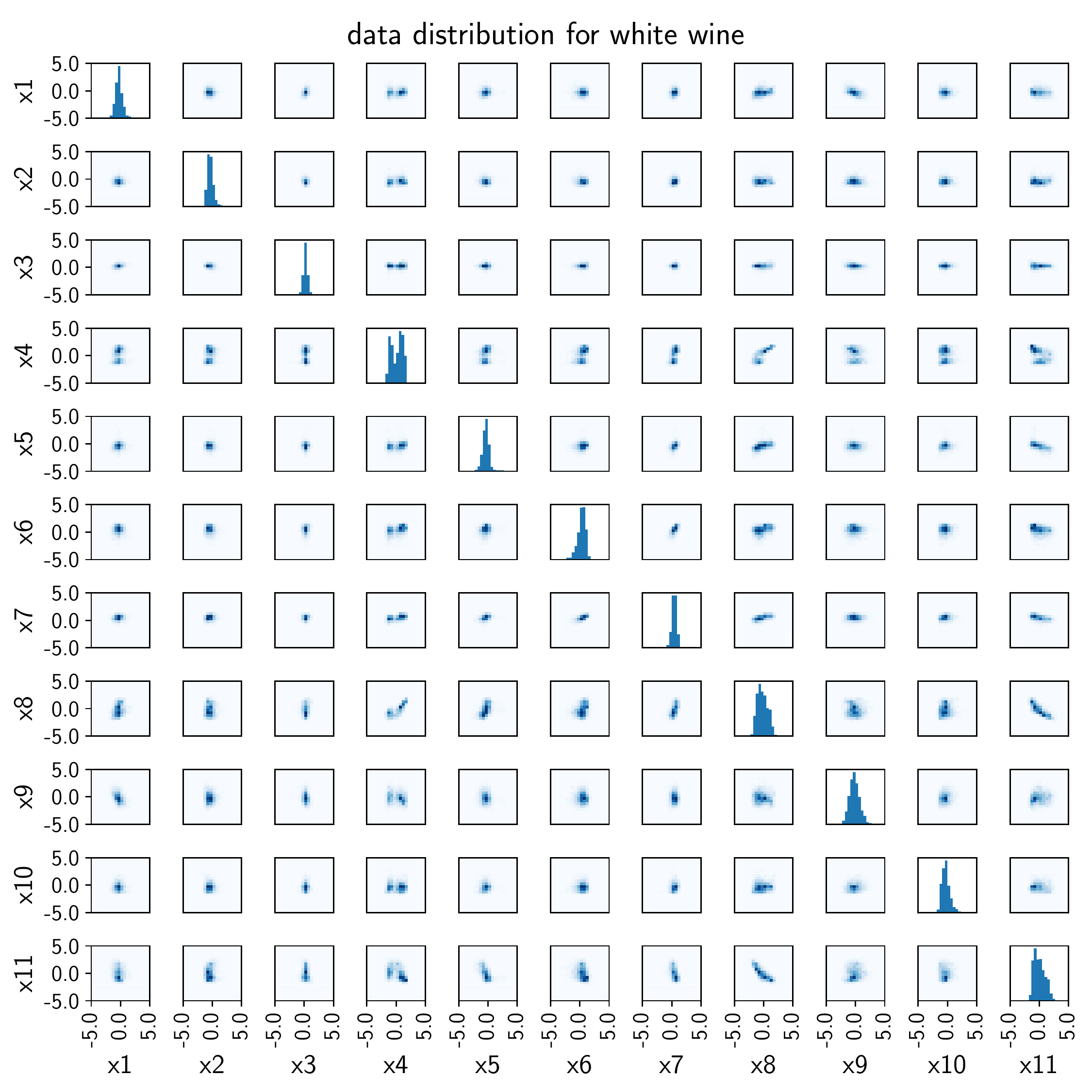} \subcaption{} \label{wine_pp_data_white} }\end{minipage} \\ 
\begin{minipage}[b ]{0.6\textwidth} {\includegraphics[ width=\textwidth ]{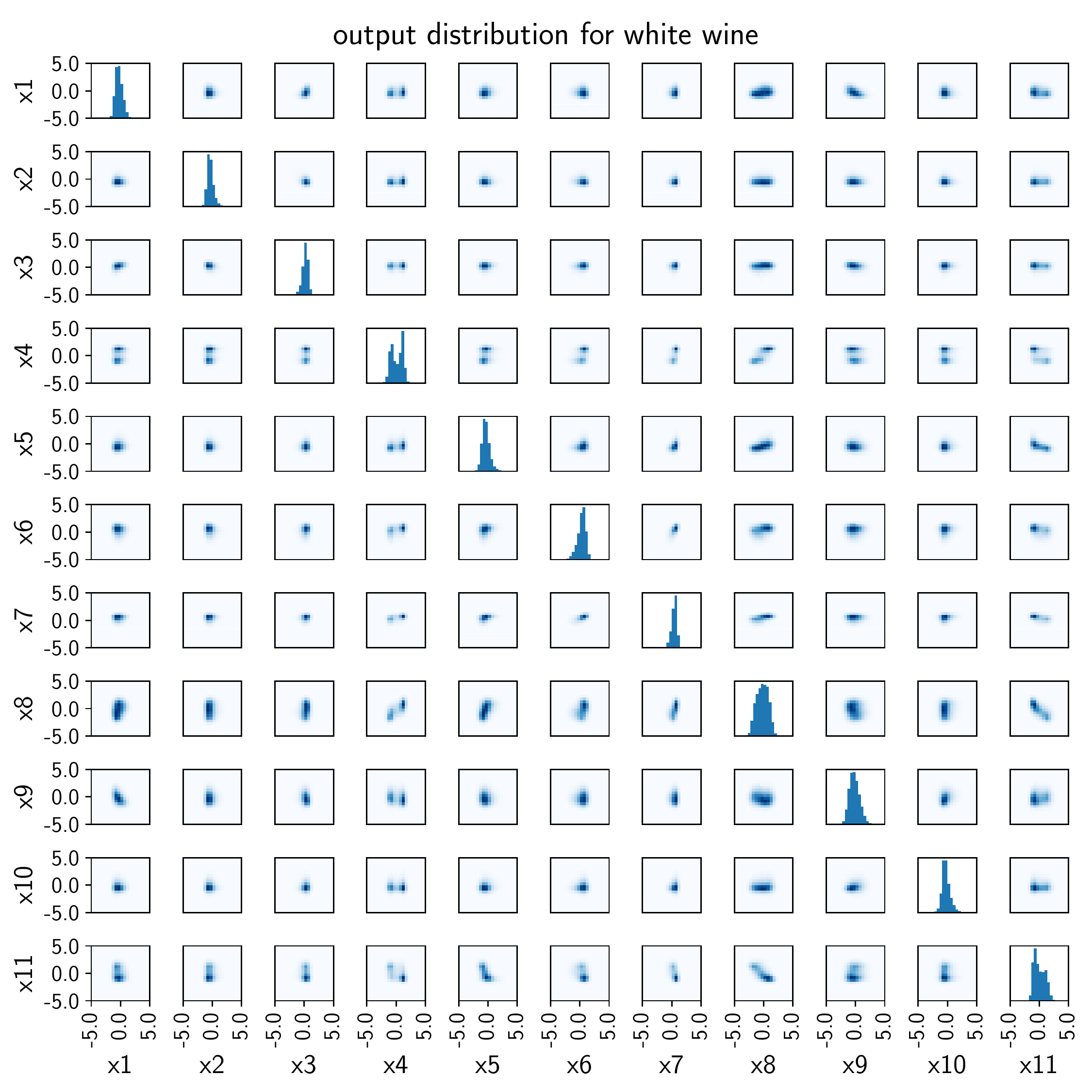} \subcaption{} \label{wine_pp_herding_white} }\end{minipage} 
\end{center}
\caption{For the white wine, the pair plot of the distributions of the dataset and the distribution obtained from entropic herding is shown. \subref{wine_pp_data_white} The dataset. \subref{wine_pp_herding_white} The distribution obtained by entropic herding. The meaning of the plot is the same as that of Fig.~\ref{wine_pp_part}, but all variables in the model are displayed}
\label{wine_pp_white}
\end{lfigure}

\section{ The optimum for the $\losa$ minimization  }
\label{sec-proof}
In this section, we present the conditions for the existence of a solution for \eq{eq-tho}. Let us assume that $\xset$ is discrete or $\xset\subset\Real^N$, and let $\pm:\xset\to\Real$ be a measurable function for each $m\in\mset$. Let $\eng_\vth, Z_\vth$, and $\pi_\vth$ be the energy function, partition function, and probability density function, respectively. In addition, let $\eta_m(\vth)\equiv\eta_m(\pi_\vth)$ denote the expectation of the feature on $\pi_\vth$ for simplicity, because we only consider the Gibbs distribution in this section. Namely, for $\xset\subset \Real^N$, we define  
\AL{
\eng_\vth(x)&=\summ \thm\pm(x),\LN{eq-energy-appx}
Z_\vth&=\int_{\xset} \exp\AS{-\eng_\vth(x)}dx,\LN{eq-z-appx}
\pi_\vth(x)&=\frac{1}{Z_\vth}\exp\AS{-\eng_\vth(x)},\LN{eq-gibbs-appx}
\fm(\vth)&=\frac{1}{Z_\vth}\int_{\xset}\ph_m(x)\exp\AS{-\eng_\vth(x)} dx\LB{eq-eta-appx}.
}
 Let $\domc\subset\mdim$ be the set of parameter vectors $\vth$ such that the integrals in \eq{eq-z-appx} and (\ref{eq-eta-appx}) are finite. For discrete $\xset$, we similarly define them by replacing the integral with the summation.  For an $\numm$-dimensional closed rectangle $\domb=[\lbb_1, \ubb_1]\times\cdots\times[\lbb_M, \ubb_M]$, let 
\AL{
\domblb_m&=\setpar{\vth\in\domb}{\th_m=\lbb_m},\NN 
\dombub_m&=\setpar{\vth\in\domb}{\th_m=\ubb_m}.
}
 Let us assume that $\lm>0$ for each $m\in\mset$. The main results of this section are as follows:

\begin{theorem}
\label{th:first} Let $\domb\subset \domc$ be an $\numm$-dimensional closed rectangle such that $\domb=[\lbb_1, \ubb_1]\times\cdots\times[\lbb_M, \ubb_M]$, and let $\fm$ be continuous on $\domb$ for each $m\in \mset$. Let $\etalba_m,\etauba_m$ be such that 
\AL{
\fm(\vth)&\ge\etalba_m \quad\text{for all}\quad \vth\in\domblb_m\text{ and}\NN 
\fm(\vth)&\le\etauba_m \quad\text{for all}\quad \vth\in\dombub_m
}
 for each $m\in \mset$. Then, for $\vec{\mu}\in\mdim$ such that  
\AL{
\etauba_m-\frac{\ubb_m}{\lm}\le \mm \le \etalba_m-\frac{\lbb_m}{\lm}\LB{eq-mu-condition}
}
 holds for all $m\in \mset$, there exists a solution of  
\AL{
\thm=\lm(\fm(\vth)-\mm)\quad{}^\forall m\in \mset.\LB{eq-tho-appx}
}
 
\end{theorem}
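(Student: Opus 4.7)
The plan is to recast \eq{eq-tho-appx} as a fixed-point problem on $\domb$ and apply the Poincar\'e--Miranda theorem. Define $F:\domb\to\mdim$ componentwise by $F_m(\vth)=\lm(\fm(\vth)-\mm)$; then a solution of \eq{eq-tho-appx} is precisely a zero of the continuous map $g(\vth)=F(\vth)-\vth$. Continuity of $g$ is immediate from the hypothesis that each $\fm$ is continuous on $\domb$, so the real work is to check the Miranda-type sign condition on each pair of opposite faces $\domblb_m$ and $\dombub_m$.

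On $\domblb_m$ we have $\thm=\lbb_m$ together with the assumed lower bound $\fm(\vth)\ge\etalba_m$; combined with $\mm\le\etalba_m-\lbb_m/\lm$ from \eq{eq-mu-condition}, this yields $F_m(\vth)\ge\lm(\etalba_m-\mm)\ge\lbb_m$, hence $g_m(\vth)\ge 0$. Symmetrically, on $\dombub_m$ we have $\thm=\ubb_m$ and $\fm(\vth)\le\etauba_m$, and together with $\mm\ge\etauba_m-\ubb_m/\lm$ this gives $F_m(\vth)\le\ubb_m$, hence $g_m(\vth)\le 0$. The assumption $\lm>0$ is what preserves the direction of the inequalities when multiplying through by $\lm$.

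With the face inequalities in hand, the Poincar\'e--Miranda theorem (applied to $-g$ if one prefers the standard sign convention in which the $m$-th component is non-positive on the lower face and non-negative on the upper face) supplies a point $\vth\in\domb$ with $g(\vth)=0$, i.e., a solution of \eq{eq-tho-appx}. An equivalent self-contained route is to clip $F$ componentwise into $[\lbb_m,\ubb_m]$, obtain a continuous self-map of $\domb$, invoke Brouwer's fixed-point theorem, and then use the same boundary inequalities to show that the clipping is inactive at the fixed point; this is essentially the standard derivation of Miranda from Brouwer.

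The only delicate step is book-keeping the sign conventions in \eq{eq-mu-condition} so that the two face inequalities line up correctly with the hypothesis of Poincar\'e--Miranda; once that is verified, the conclusion is a direct appeal to a well-known topological fixed-point theorem and requires no further analytic input (smoothness, strict convexity of $-\log Z_\vth$, integrability beyond what is packaged in $\domb\subset\domc$, etc.) on the Gibbs family.
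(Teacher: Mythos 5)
Your proof is correct and follows essentially the same route as the paper's: both verify the Miranda sign conditions on the opposite faces $\domblb_m$ and $\dombub_m$ using the bounds $\etalba_m,\etauba_m$ together with \eq{eq-mu-condition}, and then invoke the Poincar\'e--Miranda theorem (the paper works with $f_m(\vx)=(\vth_\vx)_m-\lm(\fm(\vth_\vx)-\mm)$, i.e.\ your $-g_m$ pulled back to the standard cube, which is only a sign/reparametrization difference). The remark on deriving Miranda from Brouwer via clipping is a standard aside and does not change the argument.
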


\begin{example}
[Normal distribution]\label{ex:normal} Let $\xset=\Real,\numm=2$ and $(\ph_1(x),\ph_2(x))=(x,x^2)$. Let $\domb=[-C, C]\times[1/(2s_{max}^2), 1 /(2s_{min}^2)]$, where $C>0$ and $0<s_{min}<s_{max}$. Let $(m,s^2)=(-\th_1/(2\th_2), 1/(2\th_2))$. Then, the mean and variance of $x$ for distribution $\pi_\vth$ are $m$ and $s^2$, respectively, because the energy function is  
\AN{
E_\vth(x)&=\th_1x+\th_2x^2\\
&=-\frac{m}{s^2}x+\frac{1}{2s^2}x^2\\
&=\frac{1}{2s^2}\AS{(x-m)^2-m^2}.
}
 Then, for $\vth\in\domb$, it holds 
\AN{
\et_1(\vth)&=\expe{\pi_\vth}{x}=m&\\
&=\begin{cases}
    \frac{C}{2\th_2}\ge 0& (\th_1=-C),\\
    -\frac{C}{2\th_2}\le 0& (\th_1=C),\NN
\end{cases}\\
\et_2(\vth)&=\expe{\pi_\vth}{x^2}=m^2+s^2&\\
&=\begin{cases}
    \th_1^2s_{max}^2+s_{max}^2\ge s_{max}^2& (\th_2=\frac{1}{2s_{max}^2}),\NN
    \th_1^2s_{min}^2+s_{min}^2\le (1+C^2)s_{min}^2&(\th_2=\frac{1}{2s_{min}^2}).
\end{cases}
}
 Then, we can use the Theorem \ref{th:first} with parameters $(A_1,B_1,A_2,B_2)=(0,0,s_{max}^2,(1+C^2)s_{min}^2)$; The condition \eq{eq-mu-condition} becomes 
\AN{
-\frac{C}{\Lambda_1}&\le\mu_1\le\frac{C}{\Lambda_1},\NN
(1+C^2)s_{min}^2-\frac{1}{2s_{min}^2\Lambda_2}&\le\mu_2\le s_{max}^2-\frac{1}{2s_{max}^2\Lambda_2}.
}
 For any $(\mu_1,\mu_2)\in\Real^2$, because we can select $\domb$ so as to satisfy these conditions, there exists a solution of \eq{eq-tho-appx}.  
\end{example}
To prove Theorem \ref{th:first}, we use the Poincare--Miranda theorem, which was first studied \citet{Poincare1883,Poincare1884} and proved by \citet{Miranda1940} (see also \mycite{kulpaPM,FPTbook}), as follows:

\begin{theorem}
[Poincare-Miranda ((C.3) in \mycite{FPTbook}, p.100)]\label{th:pm} Let $\bj^N$ be an $N$-dimensional cube $\setpar{\vec{x}=(x_1,\ldots,x_N)}{\lvert x_i\rvert\le 1 \text{ for } i=1,\ldots,N}$, the $i$th face $\setpar{\vec{x}\in\bj^N}{x_i=+1}$ is denoted by $\bj_i^+$, and the opposite face $\setpar{\vec{x}\in\bj^N}{x_i=-1}$ is denoted by $\bj_i^-$. Let $f_1,\ldots,f_N$ be continuous real-valued functions on $\bj^N$ such that for each $i=1,\ldots,N$, 
\AL{
\quad f_i(\vx)\ge 0 \quad \text{for } \vx\in\bj_i^+,\quad f_i(\vx)\le 0 \quad \text{for } \vx\in\bj_i^-,
}
 then, there exists $\hat{\vx}$ such that $f_i(\hat{\vx})=0$ for each $i=1,\ldots,N$. 
\end{theorem}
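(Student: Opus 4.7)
The plan is to apply the Poincar\'e--Miranda theorem (Theorem \ref{th:pm}) to the $M$ continuous functions
\AN{
f_m(\vth) \equiv \thm - \lm(\fm(\vth) - \mm), \quad m \in \mset,
}
whose simultaneous zero is exactly a solution of \eq{eq-tho-appx}, after first transforming the rectangle $\domb$ to the standard cube $\bj^M$. I would define the affine bijection $\Phi:\bj^M \to \domb$ coordinatewise by
\AN{
(\Phi(\vx))_m = \tfrac{1}{2}(\ubb_m + \lbb_m) + \tfrac{1}{2}(\ubb_m - \lbb_m)x_m,
}
so that $\Phi$ maps the face $\bj_m^+$ bijectively onto $\dombub_m$ and $\bj_m^-$ onto $\domblb_m$. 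Continuity of $\tilde{f}_m \equiv f_m\circ\Phi$ on $\bj^M$ then follows from the assumed continuity of $\fm$ on $\domb$ together with continuity of $\Phi$.

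The core of the proof is to verify the two sign conditions required by Theorem \ref{th:pm}. On $\dombub_m$ one has $\thm = \ubb_m$ and, by hypothesis, $\fm(\vth) \le \etauba_m$; since $\lm > 0$,
\AN{
f_m(\vth) = \ubb_m + \lm\mm - \lm\fm(\vth) \ge \ubb_m + \lm\mm - \lm\etauba_m,
}
and the left half $\etauba_m - \ubb_m/\lm \le \mm$ of \eq{eq-mu-condition} rearranges to exactly $\ubb_m + \lm\mm - \lm\etauba_m \ge 0$, giving $\tilde{f}_m \ge 0$ on $\bj_m^+$. Symmetrically, on $\domblb_m$ one has $\thm = \lbb_m$ and $\fm(\vth) \ge \etalba_m$, so
\AN{
f_m(\vth) = \lbb_m + \lm\mm - \lm\fm(\vth) \le \lbb_m + \lm\mm - \lm\etalba_m,
}
and the right half $\mm \le \etalba_m - \lbb_m/\lm$ of \eq{eq-mu-condition} gives $\lbb_m + \lm\mm - \lm\etalba_m \le 0$; hence $\tilde{f}_m \le 0$ on $\bj_m^-$.

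Once both sign conditions are in place, Theorem \ref{th:pm} produces some $\hat{\vx} \in \bj^M$ with $\tilde{f}_m(\hat{\vx}) = 0$ for every $m \in \mset$, and then $\vtho \equiv \Phi(\hat{\vx}) \in \domb$ satisfies $f_m(\vtho) = 0$ for every $m$, which is precisely \eq{eq-tho-appx}. The only step that needs genuine care is matching the sign conventions of Poincar\'e--Miranda (nonnegative on $\bj_m^+$, nonpositive on $\bj_m^-$) to the bounds $\etalba_m, \etauba_m$; this dictates the specific sign choice $f_m = \thm - \lm(\fm(\vth) - \mm)$ (rather than its negative) together with the pairing of $\bj_m^+$ with $\dombub_m$. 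Beyond this bookkeeping I do not foresee a substantive obstacle, since the two halves of \eq{eq-mu-condition} are precisely what is needed to validate the two hypotheses of Theorem \ref{th:pm}.
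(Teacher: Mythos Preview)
Your proposal does not address the stated theorem. The statement quoted is the Poincar\'e--Miranda theorem itself (Theorem~\ref{th:pm}), which the paper does not prove at all; it is quoted as a classical result from the cited fixed-point reference and used as a black box. What you have actually written is a proof of Theorem~\ref{th:first}, which \emph{applies} Poincar\'e--Miranda rather than establishing it.

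If one reads your proposal as a proof of Theorem~\ref{th:first}, then it is correct and essentially identical to the paper's own argument: the paper defines the same affine map $(\vth_\vx)_m=\lbb_m+(\ubb_m-\lbb_m)(x_m+1)/2$ from $\bj^M$ onto $\domb$, sets $f_m(\vx)=(\vth_\vx)_m-\lm(\fm(\vth_\vx)-\mm)$, verifies $f_m\ge 0$ on $\bj_m^+$ and $f_m\le 0$ on $\bj_m^-$ using exactly the two halves of \eq{eq-mu-condition}, and then invokes Theorem~\ref{th:pm} to obtain the common zero $\hat{\vx}$ and hence the solution $\vth_{\hat{\vx}}$ of \eq{eq-tho-appx}. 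The sign convention, the affine parametrisation, and the pairing of faces all match.
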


\begin{proof}
[Proof of Theorem \ref{th:first}] Let $\vth_\vx$ be a vector such that $(\vth_\vx)_m=\lbb_m+(\ubb_m-\lbb_m)(x_m+1)/2$. Then, it holds that $\vth_\vx\in\domblb_m$ for $\vx\in\bj^-_i$ and $\vth_\vx\in\dombub_m$ for $\vx\in\bj^+_i$. Let $f_m(\vx)=(\vth_\vx)_m-\lm(\fm(\vth_\vx)-\mm)$ for each $m\in \mset$. Then, for each $m\in \mset$ and for $\vx\in\bj^-_m$, it holds  
\AL{
f_m(\vx)&=\lbb_m-\lm(\fm(\vth_\vx)-\mm) \NN
&\le \lbb_m - \lm\AS{\etalba_m-\AS{\etalba_m-\frac{\lbb_m}{\lm}}}\NN
&=0,
}
 and for each $m\in \mset$ and for $\vx\in\bj^+_m$, it holds  
\AL{
f_m(\vx)&=\ubb_m-\lm(\fm(\vth_\vx)-\mm) \NN
&\ge\ubb_m-\lm\AS{\etauba_m-\AS{\etauba_m-\frac{\ubb_m}{\lm}}}\NN
&=0.
}
 Then, by Theorem \ref{th:pm}, there exists $\hat{\vx}$ such that $f_m(\hat{\vx})=0$ for each $m\in \mset$; therefore, $\vth_{\hat{\vx}}$ is the solution of \eq{eq-tho-appx}. 
\end{proof}
The conditions (\ref{eq-mu-condition}) are likely to be relaxed by decreasing $\lbb_m$ and increasing $\ubb_m$. However, it is not always the case because $\etalba_m$, which is the lower bound of the feature value on the face of $\domb=[\lbb_1, \ubb_1]\times\cdots\times[\lbb_M, \ubb_M]$, may decrease when $\domb$ is expanded. The conditions (\ref{eq-mu-condition}) will be satisfied, if we can sufficiently expand $\domb$ without changing the bounds $\etalba_m$ and $\etauba_m$ that appears in the condition. Based on this strategy, the following condition ensures the existence of a solution for any $\vec{\mu}\in\mdim$.

\begin{theorem}
\label{th:second} If $\domc=\mdim$ and, for each $m\in \mset$, it holds both \begin{itemize} \item there exists $\etalbb_m\in \Real$ such that $\fm(\vth)\ge \etalbb_m$ for all $\vth\in\mdim$, and \item there exists $\etaubb_m\in \Real$ such that $\fm(\vth)\le \etaubb_m$ for all $\vth\in\mdim$, \end{itemize} then, there exists a solution of \eq{eq-tho-appx} for any $\vec{\mu}\in\mdim$. 
\end{theorem}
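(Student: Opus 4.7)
The plan is to reduce Theorem~\ref{th:second} to Theorem~\ref{th:first} by choosing a rectangle $\domb$ that is large enough to accommodate any given target $\vec{\mu}\in\mdim$. The key conceptual point is that, because the feature means $\fm$ are globally bounded on all of $\mdim$, the face bounds $\etalba_m$ and $\etauba_m$ required by Theorem~\ref{th:first} can simply be taken to equal the global bounds $\etalbb_m$ and $\etaubb_m$, independent of how $\domb$ is chosen.

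Given $\vec{\mu}\in\mdim$, I would set
\[
\lbb_m \;=\; \lm(\etalbb_m-\mm), \qquad \ubb_m \;=\; \lm(\etaubb_m-\mm),
\]
and take $\domb=[\lbb_1,\ubb_1]\times\cdots\times[\lbb_M,\ubb_M]$. Since $\etalbb_m\le\etaubb_m$ by definition and $\lm>0$ by assumption, $\lbb_m\le\ubb_m$, so the rectangle is non-degenerate; and $\domb\subset\domc=\mdim$ is automatic. Taking $\etalba_m=\etalbb_m$ and $\etauba_m=\etaubb_m$, the face inequalities required by Theorem~\ref{th:first} hold trivially on $\domblb_m$ and $\dombub_m$ because the same inequalities hold on all of $\mdim$. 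Moreover, a direct substitution shows that the condition \eq{eq-mu-condition} becomes an equality on both sides for every $m\in\mset$, so it is satisfied.

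Before invoking Theorem~\ref{th:first}, continuity of each $\fm$ on $\domb$ must be established, since it is not listed explicitly among the hypotheses of Theorem~\ref{th:second}. This follows from $\domc=\mdim$ by standard differentiation under the integral sign for exponential families: the log-partition function $\log Z_\vth$ is real-analytic on the interior of $\domc$, and $\fm(\vth)=-\partial\log Z_\vth/\partial\thm$ is analytic and therefore continuous on $\domb$. With continuity in hand, Theorem~\ref{th:first} applies and yields a solution of \eq{eq-tho-appx}.

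The main obstacle, conceptually, is seeing why global boundedness is exactly the right strengthening of the hypotheses of Theorem~\ref{th:first}. As the remark immediately following Theorem~\ref{th:first} points out, naively enlarging $\domb$ can worsen the face bounds $\etalba_m,\etauba_m$ and thus tighten \eq{eq-mu-condition} rather than relax it. Global boundedness breaks this coupling by pinning $\etalba_m,\etauba_m$ to constants independent of $\domb$, so that the half-widths $\ubb_m$ and $-\lbb_m$ can be scaled freely to match any $\vec{\mu}\in\mdim$. Modulo the continuity verification, the remaining algebra is precisely the substitution carried out above.
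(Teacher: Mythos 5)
Your proof is correct and follows essentially the same route as the paper: both reduce the statement to Theorem~\ref{th:first} by taking $\etalba_m=\etalbb_m$, $\etauba_m=\etaubb_m$ and choosing the rectangle $\domb$ so that \eq{eq-mu-condition} holds (in your case with equality). The only differences are minor: the paper sets $\lbb_m=\min\setp{\lm(\etalbb_m-\mm),\lbd_m}$ and $\ubb_m=\max\setp{\lm(\etaubb_m-\mm),\ubd_m}$ with an auxiliary pair $\lbd_m<\ubd_m$, which guarantees a non-degenerate rectangle even if $\etalbb_m=\etaubb_m$ (your exact choice can collapse the $m$th interval to a point in that case, though the Poincar\'e--Miranda argument still goes through since then $f_m\equiv 0$), and your explicit verification of the continuity of $\fm$ is a hypothesis the paper's proof leaves implicit.
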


\begin{proof}
 Let $\lbd_m, \ubd_m$ be such that $\lbd_m<\ubd_m$ for each $m\in\mset$. Let $\etalba_m=\etalbb_m$, $\lbb_m=\min\setp{\lm(\etalba_m-\mm), \lbd_m}$, $\etauba_m=\etaubb_m$, and $\ubb_m=\max\setp{\lm(\etauba_m-\mm), \ubd_m}$ for each $m\in\mset$. Let $\domb=[\lbb_1, \ubb_1]\times\cdots\times[\lbb_M, \ubb_M]\subset\domc$ be an $\numm$-dimensional closed rectangle. Then, the existence of a solution of \eq{eq-tho-appx} for any $\vec{\mu}\in\mdim$ is assured by Theorem \ref{th:first} because the conditions are satisfied as follows:     Note that $\fm(\vth)\ge\etalba_m$ for $\vth\in\domblb_m$ under the assumption. Because $\lbb_m\le\lm(\etalba_m-\mm)$, it also holds that 
\AL{
\mm \le \etalba_m-\frac{\lbb_m}{\lm},\LB{eq-mu-condition2}
}
 which is the second inequality of \eq{eq-mu-condition}.     Similarly, note that $\fm(\vth)\le\etauba_m$ for $\vth\in\dombub_m$ under the assumption. Because $\ubb_m\ge\lm(\etauba_m-\mm)$, it also holds that 
\AL{
\etauba_m-\frac{\ubb_m}{\lm}\le \mm,\LB{eq-mu-condition1}
}
 which is the first inequality of \eq{eq-mu-condition}. 
\end{proof}
 The following are simple cases for Theorem \ref{th:second}, where both $\xset$ and $\pm$ are bounded.

\begin{corollary}
\label{th:coro} If either 
\begin{itemize}
\item $\xset$ is discrete and finite, or
\item $\xset$ is a bounded closed subset of $\mdim$ and, for all $m\in \mset$, $\ph_m(x)$ is a bounded function on $\xset$,
\end{itemize}
 then, there exists a solution of \eq{eq-tho-appx} for any $\vec{\mu}\in\mdim$. 
\end{corollary}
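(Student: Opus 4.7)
The plan is to deduce Corollary \ref{th:coro} directly from Theorem \ref{th:second} by verifying its two hypotheses in each of the two cases: (i) $\domc=\mdim$, and (ii) each $\fm$ is bounded above and below on $\mdim$. So the work reduces to two short checks, one for the finite-discrete case and one for the bounded-measurable case.

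First I would handle the finite-discrete case. Since $\xset$ is finite, $Z_\vth=\sum_{x\in\xset}\exp(-\eng_\vth(x))$ and $\sum_{x\in\xset}\ph_m(x)\exp(-\eng_\vth(x))$ are finite sums of real numbers, hence finite for every $\vth\in\mdim$, giving $\domc=\mdim$. Moreover $\fm(\vth)$ is a convex combination of the finite set $\{\ph_m(x):x\in\xset\}$, so the trivial bounds
\AL{
\etalbb_m=\min_{x\in\xset}\ph_m(x),\quad \etaubb_m=\max_{x\in\xset}\ph_m(x)\nonumber
}
serve as uniform bounds independent of $\vth$.

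For the bounded closed case, since $\ph_m$ is bounded on $\xset$, $\eng_\vth=\summ\thm\ph_m$ is bounded on $\xset$ for every fixed $\vth$. Combined with boundedness of $\xset\subset\mdim$, the function $\exp(-\eng_\vth(x))$ is bounded above by a finite constant (depending on $\vth$) on a set of finite Lebesgue measure, so the integrals in \eq{eq-z-appx} and \eq{eq-eta-appx} are finite and $\domc=\mdim$. Again $\fm(\vth)$ is a weighted average of the values $\ph_m(x)$ with the positive weights $\exp(-\eng_\vth(x))/Z_\vth$, so
\AL{
\inf_{x\in\xset}\ph_m(x)\le\fm(\vth)\le\sup_{x\in\xset}\ph_m(x),\nonumber
}
yielding the uniform lower and upper bounds $\etalbb_m,\etaubb_m$ required by Theorem \ref{th:second}.

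Once both checks are in place, Theorem \ref{th:second} immediately produces a solution of \eq{eq-tho-appx} for every $\vec{\mu}\in\mdim$, finishing the proof. No step is really an obstacle here; the only subtlety worth stating carefully is that the upper bound on $\exp(-\eng_\vth)$ in the bounded-closed case depends on $\vth$ (through $\summ\lvert\thm\rvert\sup_x\lvert\ph_m(x)\rvert$), but this is harmless since we need finiteness of the integrals only pointwise in $\vth$, not uniformly.
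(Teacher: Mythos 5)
Your proof is correct and follows the same route as the paper, which presents the corollary as an immediate consequence of Theorem \ref{th:second}; you simply make explicit the two verifications (that $\domc=\mdim$ and that each $\fm$ is uniformly bounded via the convex-combination argument) that the paper leaves implicit. The only detail worth adding is that the continuity of $\fm$, which Theorem \ref{th:second} implicitly needs when it invokes Theorem \ref{th:first}, also holds in both cases --- trivially for the finite sum, and by dominated convergence for the bounded closed case.
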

 Note that if $\xset$ is unbounded, the distribution \eq{eq-eta-appx} is not well-defined for all $\vth\in\mdim$, that is, $\domc\neq \mdim$. Therefore, the existence of the solution of \eq{eq-tho} is not guaranteed for all $\vec{\mu}\in\mdim$, because $\domb\subset\domc$ that satisfies the conditions of Theorem \ref{th:first} for all $\vec{\mu}\in\mdim$ may not exist. However, we still have a chance to assure the existence of a solution by explicitly obtaining $\domb$, which may depends on $\vec{\mu}$, as illustrated in Example \ref{ex:normal}.

\end{appendices}


\bibliography{mybib}


\end{document}